\documentclass[11pt]{airesearch}
\usepackage[T1]{fontenc}
\usepackage{XCharter}

\usepackage{fullpage}
\usepackage{wrapfig}
\usepackage{mathtools}
\setlist[itemize]{leftmargin=*,topsep=1pt,itemsep=1pt}
\setlist[enumerate]{leftmargin=*,topsep=1pt,itemsep=1pt}
\usepackage{xparse}
\RenewDocumentCommand{\paragraph}{s o m}{%
  \par\medskip\noindent\textbf{#3}\quad\ignorespaces
}

\usepackage{algorithm}
\usepackage{algpseudocode}  

\ifdefined\final
  \usepackage[disable]{todonotes}
\else
  \usepackage[textsize=tiny]{todonotes}
\fi

\newcommand{\projectpage}[1]{%
  \begin{center}
    \small
    \vspace{-1.25ex}
    \texttt{Project Page}: \url{#1}
  \end{center}
}

\usepackage{tikz}
\usetikzlibrary{backgrounds}
\usetikzlibrary{arrows.meta}
\definecolor{aBlue}    {HTML}{007AFF}
\definecolor{aIndigo}  {HTML}{5856D6}
\definecolor{aPurple}  {HTML}{AF52DE}
\definecolor{aPink}    {HTML}{FF2D55}
\definecolor{aRed}     {HTML}{FF3B30}
\definecolor{aOrange}  {HTML}{FF9500}
\definecolor{aYellow}  {HTML}{FFCC00}
\definecolor{aGreen}   {HTML}{34C759}
\definecolor{aTeal}    {HTML}{30B0C7}
\definecolor{aGray}    {HTML}{8E8E93}
\definecolor{aGrayLn} {HTML}{C7C7CC}
\definecolor{aBg}      {HTML}{F2F2F7}
\definecolor{aTxt}     {HTML}{1C1C1E}
\definecolor{aSub}     {HTML}{6E6E73}

\title{\bfseries DeepLoop: Depth Scaling for Looped Transformers}

\author{
\textbf{Shuzhen Li}$^1$ \quad
\textbf{Yifan Zhang}$^{1,\dagger}$
\quad \textbf{Jiacheng Guo}$^1$\\[0.5mm]
\textbf{Quanquan Gu}$^2$ \quad
\textbf{Mengdi Wang}$^{1,\dagger}$\\[1.5mm]
$^1$Princeton University \quad $^2$University of California, Los Angeles\\
}
\date{}

\begin{document}
\maketitle

\begingroup
\footnotetext{Corresponding authors: \texttt{yifzhang@princeton.edu}, \texttt{mengdiw@princeton.edu}.}
\endgroup

\begin{abstract}
Looped Transformers scale sequential computation by applying a compact stack of
physical blocks for multiple rounds, increasing unrolled depth without
increasing stored parameters. This reuse changes the residual-scaling problem: 
in an untied Transformer, each residual branch receives and applies its own
parameter update, whereas in a looped Transformer one shared update aggregates
gradients from repeated visits and is read back by those same visits in the
next linearized forward pass. We formalize this tied-depth effect through a
first-order perturbation bound controlled by a visit-alignment coefficient
$\kappa_R$. The bound recovers the DeepNorm exponent when visits decorrelate,
but in the conservative aligned regime it requires the exponent to increase
from $1/4$ to $1/2$ as loop count grows at fixed physical depth. The resulting
method, \textbf{DeepLoop}, keeps the Post-LN DeepNorm architecture with a
normalized branch input and sets
$\alpha=(2N)^{1/2}$ and $\beta=(8N)^{-1/2}$ for unrolled depth $N$. On
GPT-style looped language models at GPT-2 small and GPT-2 medium scale,
DeepLoop is neutral when no physical block is revisited, improves validation
loss at every activated loop count, and improves average downstream accuracy
in all but one of the evaluated settings. These results show that stable recurrent depth requires residual
scaling rules that account for parameter visits, not only nominal layer count.
\end{abstract}

\projectpage{https://github.com/lszshu/DeepLoop}

\begin{figure}[t]
  \centering
\begingroup
\makeatletter
\def\Hv@scale{0.92}%
\makeatother
\renewcommand{\sfdefault}{phv}%
\definecolor{dlBlue}  {HTML}{4C72B0} 
\definecolor{dlOrange}{HTML}{DD8452} 
\definecolor{dlInk}   {HTML}{2B2F36} 
\definecolor{dlMut}   {HTML}{6A7280} 
\definecolor{dlLine}  {HTML}{9AA0A8} 
\definecolor{dlArrow} {HTML}{565D66} 
\begin{tikzpicture}[
  font=\sffamily,
  >={Stealth[length=1.7mm,width=1.4mm]},
  line cap=round, line join=round,
  every node/.style={text=dlInk, outer sep=0pt},
  subS/.style 2 args={rectangle, rounded corners=2pt, draw=#1,
      line width=0.6pt, fill=#2, minimum width=7.6mm, minimum height=6.2mm,
      inner sep=1pt, font=\sffamily\scriptsize},
  subL/.style 2 args={rectangle, rounded corners=2pt, draw=#1,
      line width=0.6pt, fill=#2, minimum width=11.5mm, minimum height=6.8mm,
      inner sep=1pt, font=\sffamily\footnotesize},
  chip/.style={rectangle, rounded corners=2pt, draw=dlLine, fill=black!5,
      line width=0.6pt, minimum height=6.2mm, minimum width=8.5mm,
      inner sep=2.5pt, font=\footnotesize},
  gray box/.style={rectangle, rounded corners=2pt, draw=dlLine, fill=black!4,
      line width=0.6pt, font=\sffamily\footnotesize},
  flow/.style={->, draw=dlArrow, line width=0.6pt,
      shorten >=1.2pt, shorten <=1.2pt},
  head/.style={font=\sffamily\small, anchor=base west, inner sep=0pt},
  mut/.style={font=\sffamily\scriptsize, text=dlMut},
]

\node[head] at (-1.53, 10.15)
  {{\bfseries (a)\; Physical blocks} \textcolor{dlMut}{---\;
   $K{=}2$ shared blocks, stored once}};

\node[rectangle, rounded corners=2pt, draw=dlBlue, fill=dlBlue!6,
      line width=0.8pt, minimum width=38mm, minimum height=16mm]
  (B1) at (0.77, 8.90) {};
\node[font=\sffamily\footnotesize\bfseries, text=dlBlue]
  at (0.77, 9.37) {Block 1\; ($\phi_1$)};
\node[subL={dlBlue}{dlBlue!13}, minimum width=13mm]
  (b1a) at (-0.08, 8.60) {attn};
\node[subL={dlBlue}{dlBlue!30}, minimum width=13mm]
  (b1f) at ( 1.62, 8.60) {ffn};
\draw[flow] (b1a) -- (b1f);

\node[rectangle, rounded corners=2pt, draw=dlOrange, fill=dlOrange!6,
      line width=0.8pt, minimum width=38mm, minimum height=16mm]
  (B2) at (5.37, 8.90) {};
\node[font=\sffamily\footnotesize\bfseries, text=dlOrange]
  at (5.37, 9.37) {Block 2\; ($\phi_2$)};
\node[subL={dlOrange}{dlOrange!13}, minimum width=13mm]
  (b2a) at (4.52, 8.60) {attn};
\node[subL={dlOrange}{dlOrange!30}, minimum width=13mm]
  (b2f) at (6.22, 8.60) {ffn};
\draw[flow] (b2a) -- (b2f);

\draw[flow] (B1) -- (B2);

\draw[flow, rounded corners=3pt]
  (B2.east) -- ++(0.40,0) |- (3.07, 7.55) -| ($(B1.west)+(-0.40,0)$)
  -- (B1.west);
\node[font=\sffamily\footnotesize, fill=white, inner sep=2.5pt]
  at (3.07, 7.55) {loop $\times\,R$ rounds};

\node[head] at (-1.53, 6.45)
  {{\bfseries (b)\; Unrolled execution} \textcolor{dlMut}{---\;
   the same two blocks are revisited in every round}};

\def\yc{5.45}
\node[chip] (xin)  at (-1.105, \yc) {$\mathbf{x}_0$};
\node[chip] (xout) at (14.085, \yc) {$\mathbf{x}_N$};

\foreach \gx/\hue/\pj [count=\i] in
  {0/dlBlue/1, 2.30/dlOrange/2, 4.82/dlBlue/1,
   7.12/dlOrange/2, 9.64/dlBlue/1, 11.94/dlOrange/2}{
  \node[subS={\hue}{\hue!13}] (a\i) at (\gx, \yc) {attn};
  \node[subS={\hue}{\hue!30}] (f\i) at (\gx+1.04, \yc) {ffn};
  \draw[flow] (a\i) -- (f\i);
  \draw[draw=\hue!45, line width=0.5pt, rounded corners=2pt]
    (\gx-0.51, \yc-0.45) rectangle (\gx+1.55, \yc+0.45);
  \node[font=\sffamily\scriptsize, text=\hue] at (\gx+0.52, \yc-0.72)
    {$\phi_{\pj}$};
}

\draw[flow] (xin) -- (a1);
\foreach \i [evaluate=\i as \j using int(\i+1)] in {1,...,5}{
  \draw[flow] (f\i) -- (a\j);}
\draw[flow] (f6) -- (xout);

\draw[dlLine!70, line width=0.45pt, dash pattern=on 2.6pt off 2pt]
  (4.08, \yc+0.5) -- (4.08, \yc-1.25);
\draw[dlLine!70, line width=0.45pt, dash pattern=on 2.6pt off 2pt]
  (8.90, \yc+0.5) -- (8.90, \yc-1.25);
\node[mut] at ( 1.67, \yc-1.05) {round $r{=}1$};
\node[mut] at ( 6.49, \yc-1.05) {round $r{=}2$};
\node[mut] at (11.31, \yc-1.05) {round $r{=}3$};

\node[font=\sffamily\footnotesize, text=dlInk] at (6.50, 3.90)
  {$R{=}3$ rounds $\times$ $K{=}2$ blocks $\;\Rightarrow\;$
   unrolled depth $N{=}KR{=}6$,\quad $M{=}2N{=}12$ sublayer visits};

\node[head] at (-1.53, 3.10) {{\bfseries (c)\; Residual sublayer:}\;\;
  $\mathbf{x}_{i+1} = \mathrm{Norm}\bigl(\alpha\,\mathbf{x}_i
    + f_j(\mathrm{Norm}(\mathbf{x}_i);\phi_j)\bigr)$};

\def\yd{1.95}
\node[chip] (ri) at (-1.105, \yd) {$\mathbf{x}_i$};
\node[circle, draw=dlArrow, fill=white, line width=0.6pt,
      minimum size=4.2mm, inner sep=0pt, font=\footnotesize]
  (add) at (5.05, \yd) {$+$};
\node[gray box, minimum width=13.5mm, minimum height=6.5mm]
  (nrm) at (6.90, \yd) {Norm};
\node[chip] (ro) at (8.80, \yd) {$\mathbf{x}_{i+1}$};
\node[gray box, minimum width=10mm, minimum height=6.5mm]
  (inrm) at (1.15, 0.85) {Norm};
\node[gray box, minimum width=21mm, minimum height=8mm]
  (fj) at (3.10, 0.85) {$f_j(\,\cdot\,;\phi_j)$};

\coordinate (sp) at (0.25, \yd);
\fill[dlArrow] (sp) circle (1.1pt);
\draw[draw=dlArrow, line width=0.6pt] (ri) -- (sp);
\draw[flow] (sp) -- (add);
\node[font=\sffamily\scriptsize, anchor=south] at (2.65, \yd+0.13)
  {{\color{dlMut}skip}\; $\times\,\alpha$};
\draw[flow, rounded corners=3pt] (sp) |- (inrm.west);
\draw[flow] (inrm.east) -- (fj.west);
\draw[flow, rounded corners=3pt] (fj.east) -| (add.south);
\draw[flow] (add) -- (nrm);
\draw[flow] (nrm) -- (ro);

\draw[draw=dlLine, line width=0.45pt, dash pattern=on 1.6pt off 1.4pt]
  (fj.south) -- ++(0, -0.22);
\node[font=\sffamily\scriptsize, anchor=north] at (3.10, 0.20)
  {{\color{dlMut}init gain}\, $\beta$};

\node[head] at (10.63, 3.10) {{\bfseries (d)\; Scaling rule}};
\node[rectangle, rounded corners=2pt, draw=dlInk!75, fill=black!3,
      line width=0.8pt, minimum width=39mm, minimum height=27mm,
      text width=32mm, anchor=north east, align=left, inner sep=8pt,
      font=\sffamily\footnotesize]
  (rule) at (14.53, 2.70) {%
    {\bfseries DeepLoop} \; ($p = \tfrac{1}{2}$)\\[5pt]
    $\alpha = (2N)^{1/2}$\\[2pt]
    $\beta  = (8N)^{-1/2}$\\[5pt]
     };

\end{tikzpicture}%
\endgroup
  \caption{Overview of the DeepLoop framework.
    \textbf{(a) Physical blocks:} A looped Transformer stores $K{=}2$
    physical blocks (each with an attention and an FFN sublayer),
    Block~1 ($\phi_1$) and Block~2 ($\phi_2$).
    \textbf{(b) Unrolled execution:} The whole $K$-block stack is applied
    once per round for $R{=}3$ rounds, giving $N{=}KR{=}6$ unrolled
    blocks and $M{=}2N{=}12$ sublayer visits; the $\phi_j$ label shows
    which physical block is reused.
    \textbf{(c) Residual sublayer:} Each visit applies
    $\mathbf{x}_{i+1}{=}\mathrm{Norm}(\alpha\,\mathbf{x}_i
    + f_j(\mathrm{Norm}(\mathbf{x}_i);\phi_j))$, with the inner
    normalization fixing the branch-input scale, $\alpha$ scaling the skip
    connection, and $\beta$ the per-matrix initialization gain.
    \textbf{(d) Scaling rule:} DeepLoop sets $\alpha{=}(2N)^{1/2}$ and
    $\beta{=}(8N)^{-1/2}$.}
  \label{fig:deeploop-overview}
\end{figure}

\section{Introduction}

Depth is one of the most reliable ways to improve Transformer expressivity, but in standard architectures, depth and parameter count grow together: adding a layer also adds a new set of attention and feed-forward weights
\citep{vaswani2017attention,kaplan2020scaling,hoffmann2022training}. Looped
Transformers decouple these axes by applying a stack of $K$ physical blocks for
$R$ rounds, yielding unrolled depth $N=KR$ while storing only $K$ blocks
(Figure~\ref{fig:deeploop-overview}). This mechanism connects classical depth-wise sharing in Universal Transformers,
ALBERT, and Subformer \citep{dehghani2018universal,lan2019albert,reid2021subformer}
with recent recurrent-depth and test-time-compute models that spend additional
sequential computation without a proportional increase in parameters
\citep{zhu2026scalinglatentreasoninglooped, fu2026discolooploopingdiscreteembeddings, giannou2023looped,yang2023looped,gatmiry2024can,geiping2025scaling,
saunshi2025reasoning}. For the looped depth to become a practical scaling axis, however, its residual parameterization must remain stable as the same physical blocks are revisited many times.

The difficulty is that standard residual-scaling analyses are written for
untied depth. Residual parameterization is decisive for deep Transformer
optimization \citep{xiong2020layer,nguyen2019transformers,liu2020understanding,
huang2020improving,bachlechner2021rezero,wang2024deepnet}. DeepNorm, in
particular, makes very deep Post-LN Transformers trainable by choosing a skip
scale $\alpha$ and a residual-branch initialization gain $\beta$ so that the
first-order effect of parameter updates remains bounded across $M=2N$ residual
sublayer visits \citep{wang2024deepnet}. This calculation assumes that each
unrolled residual sublayer owns a distinct parameter tensor and therefore
contributes one independent update term.

Weight sharing violates precisely this assumption. When a physical sublayer is
visited $R$ times, its optimizer update aggregates the visit-wise gradients
from all rounds. The updated tensor is then reused by all of those visits in
the next linearized forward computation. Tied depth therefore creates two
coupled aggregation paths: the update is written by many visits and then read
by many visits. The size of this effect depends on visit alignment. If
visit-wise gradients and sensitivities are nearly orthogonal across rounds, a
looped model behaves like untied depth up to constants. If they are coherent,
the shared update can acquire an additional factor of $R$.

This paper makes the tied-depth effect explicit. We introduce a
visit-alignment coefficient $\kappa_R$ and show that the first-order stability
condition for a looped Transformer becomes
\[
  M\kappa_R\left(\frac{\beta}{\alpha}\right)^2 = O(1),
\]
rather than the untied DeepNorm condition $M(\beta/\alpha)^2=O(1)$. For the
scaling family $\alpha=(cN)^p$ and $\beta=(dN)^{-p}$, the usual DeepNorm
threshold $p=1/4$ is recovered when visits decorrelate. In the conservative
aligned regime, where $\kappa_R=\Theta(R)$ and $K$ is fixed while $R$ grows, the threshold becomes $p=1/2$.

The resulting method is \textbf{DeepLoop}: keep the DeepNorm Post-LN
architecture, augmented with the branch-input normalization that is standard
in recurrent-depth models \citep{geiping2025scaling} and provably inert to
the perturbation analysis, and use the loop-aware scaling rule
\[
  \alpha=(2N)^{1/2},
  \qquad
  \beta=(8N)^{-1/2}.
\]
DeepLoop introduces no gates, learned residual coefficients, auxiliary losses,
or architecture-specific tuning constants. It is a deterministic residual
parameterization for the regime where effective depth is increased by
revisiting shared blocks.

We evaluate DeepLoop in controlled GPT-style looped language modeling
ablations at GPT-2 small and GPT-2 medium scales. DeepLoop is essentially
neutral at $R=1$, where no physical block is revisited, and consistently
improves final validation loss at larger loop counts. The advantage transfers
to an eight-task language-model evaluation suite: downstream averages are
close at $R=1$ and, with a single exception at one intermediate loop count,
move in favor of DeepLoop as recurrent depth grows. These
results support the central prediction of the analysis: residual scaling
should depend on how depth is realized, not only on the nominal number of
unrolled layers.

Our contributions are:
\begin{enumerate}[leftmargin=*, itemsep=1pt, topsep=1pt]
    \item We identify the tied-depth aggregation mechanism that is absent from
    untied residual-scaling analyses: a shared update is accumulated across
    repeated visits and then read through those same visits;
    \item We derive a loop-aware first-order perturbation bound with an
    explicit visit-alignment coefficient $\kappa_R$, recovering DeepNorm in the
    decorrelated regime and yielding a $p=1/2$ exponent threshold in the
    aligned fixed-physical-depth regime;
    \item We propose DeepLoop, the scaling rule
    $\alpha=(2N)^{1/2}$ and $\beta=(8N)^{-1/2}$, as a one-line conservative
    parameterization for Post-LN looped Transformers with a normalized branch
    input;
    \item We provide controlled pre-training and downstream ablations at GPT-2
    small and GPT-2 medium scale, showing that the parameterization becomes
    useful precisely when the loop count is greater than one, with the
    exponent probed separately by a $p$-sweep (Appendix~\ref{app:psweep}).
\end{enumerate}
\section{Background}
\label{sec:background}

\subsection{Looped Transformers and effective depth}
\label{sec:background:looped}

A standard depth-$N$ Transformer applies $N$ distinct blocks once. A
looped Transformer instead chooses $K$ physical blocks and applies them
for $R$ rounds,
\begin{equation}
  \mathbf{x}_{r,k+1}
  = B_k(\mathbf{x}_{r,k};\phi_k),
  \qquad
  k=1,\ldots,K,\quad r=1,\ldots,R,
  \qquad
  \mathbf{x}_{r+1,1}=\mathbf{x}_{r,K+1},
\end{equation}
so the effective depth is $N=KR$ while the block parameters
$\phi_1,\ldots,\phi_K$ are stored only once. The Universal Transformer
\citep{dehghani2018universal} corresponds to the fully recurrent
extreme in which the same transition is reused across depth; ALBERT
\citep{lan2019albert} similarly ties Transformer parameters across
layers. The regime studied here keeps $K$ fixed or small and increases
$R$, thereby increasing test-time compute and unrolled depth without
increasing the number of physical blocks.

\subsection{Residual normalization}
\label{sec:background:residual}

Let $g_\ell$ denote one residual sublayer, either attention or an MLP.
The two common normalization placements are
\begin{align*}
  \text{Pre-LN:}\quad
  \mathbf{x}_{\ell+1}
  = \mathbf{x}_\ell + g_\ell(\mathrm{Norm}(\mathbf{x}_\ell)),
  \qquad
  \text{Post-LN:}\quad
  \mathbf{x}_{\ell+1}
  = \mathrm{Norm}\left(\mathbf{x}_\ell + g_\ell(\mathbf{x}_\ell)\right).
\end{align*}
Pre-LN improves optimization stability in deep Transformers, while
Post-LN can preserve a more expressive residual stream
\citep{xiong2020layer,nguyen2019transformers}. Recent recurrent-depth models
additionally normalize the branch input inside a post-normalized residual, a
sandwich placement that combines the two schemes \citep{geiping2025scaling};
Section~\ref{sec:method:setup} adopts this placement and shows that it leaves
the perturbation analysis unchanged. DeepNorm
\citep{wang2024deepnet} modifies the Post-LN residual path by scaling
the skip connection before normalization:
\begin{align}
  \mathbf{x}_{\ell+1}
  = \mathrm{Norm}\left(\alpha\,\mathbf{x}_\ell
  + g_\ell(\mathbf{x}_\ell;\theta_\ell)\right).
\end{align}
For an encoder-only or decoder-only Transformer with $N$ blocks and
$M=2N$ residual sublayer applications, DeepNorm sets
\begin{equation}
  \alpha=(2N)^{1/4},
  \qquad
  \beta=(8N)^{-1/4},
  \label{eq:deepnorm-scales-background}
\end{equation}
Here $\beta$ is an initialization gain, not an additional runtime multiplier on
the residual stream. If $\mathcal{S}^{\mathrm{DN}}_\ell$ denotes the
residual-branch matrices scaled by DeepNet in sublayer $\ell$, such as the value
and output projections in attention and the feed-forward matrices, then
DeepNorm initializes
\begin{equation}
  W_{\ell,q}^{(0)}
  =
  \beta_{\mathrm{DN}}\,
  \widetilde W_{\ell,q}^{(0)},
  \qquad
  q\in\mathcal{S}^{\mathrm{DN}}_\ell,
  \qquad
  \beta_{\mathrm{DN}}=(8N)^{-1/4},
  \label{eq:deepnorm-beta-usage}
\end{equation}
with the usual unscaled initializer used for
$\widetilde W_{\ell,q}^{(0)}$. The quantity that enters the perturbation
argument is therefore
\begin{equation}
  \frac{\beta_{\mathrm{DN}}}{\alpha_{\mathrm{DN}}}
  =
  \frac{(8N)^{-1/4}}{(2N)^{1/4}}
  =
  \frac{1}{2\sqrt{N}}.
  \label{eq:deepnorm-beta-over-alpha-background}
\end{equation}
The useful way to summarize the DeepNorm calculation is not a bound on
$\alpha^2\beta$, but the first-order update condition
\begin{equation}
  M\left(\frac{\beta}{\alpha}\right)^2 = \Theta(1).
  \label{eq:deepnorm-condition-background}
\end{equation}
Indeed, substituting Eq.~(\ref{eq:deepnorm-beta-over-alpha-background}) gives
$2N(\beta/\alpha)^2=1/2$.

\subsection{Why weight sharing changes the depth scaling}
\label{sec:background:sharing}

In a non-shared depth-$N$ Transformer, each sublayer parameter receives
one gradient contribution per input. In a looped Transformer, the same
physical parameter is visited once per round, so its gradient is a sum
over $R$ visits. If the visit-wise contributions decorrelate, the loop
behaves like untied depth up to constants. If the visits are aligned,
the tied update can be $R$ times larger, and that same update is then
read by all $R$ visits in the unrolled computation. DeepLoop is the
corresponding conservative correction to Eq.~(\ref{eq:deepnorm-condition-background}): it chooses a smaller
update-to-residual ratio $\beta/\alpha$ so that the first-order effect of the tied parameter update remains bounded after all $R$ visits are unrolled.

\section{DeepLoop Transformer}
\label{sec:method}

A looped Transformer reuses the same $K$ physical blocks for $R$ rounds. Its
unrolled depth is $N=KR$, and the number of residual-sublayer visits is
$M=2N$. DeepLoop keeps the DeepNorm architecture with a normalized branch
input and uses the exponent
$p=1/2$ rather than $p=1/4$ in the fixed-physical-depth, increasing-loop-count
regime. The derivation separates two facts: RMSNorm exposes the residual branch
through a factor $1/\alpha$, and parameter tying changes the first-order update
sum from $M$ terms to $M\kappa_R$ terms, where $\kappa_R$ measures alignment
across loop visits.

\subsection{Setup: looped post-normalized block}
\label{sec:method:setup}

Each physical block contains two residual sublayers: attention and MLP. Let
$j=(k,s)$ index a physical sublayer, with $k\in\{1,\ldots,K\}$ and
$s\in\{\mathrm{attn},\mathrm{ffn}\}$, and let $i=(r,j)$ denote its $r$-th
unrolled visit. We write $J=2K$ for the number of physical residual sublayers
and $M=JR=2KR=2N$ for the number of unrolled visits. DeepLoop uses the
post-normalized sandwich block
\begin{align}
  \mathbf{x}_{i+1}
  = \mathrm{Norm}\left(
      \alpha\,\mathbf{x}_i
      + f_j(\mathrm{Norm}(\mathbf{x}_i);\phi_j)
    \right),
  \qquad i=1,\ldots,M,
  \label{eq:deeploop-block}
\end{align}
where the same physical parameter $\phi_j$ is reused for all $r=1,\ldots,R$.
In our implementation, both normalizations are RMSNorm. The outer
normalization restores the residual-stream scale after every visit; the
inner normalization fixes the scale of the branch input. The loop entry is
normalized by an input-embedding RMSNorm, and every later state is an output
of the outer normalization, so at initialization, where the perturbation
analysis is conducted and all normalization gains equal one, the stream
entering each visit satisfies $\mathrm{RMS}(\mathbf{x}_i)=1$. On this
unit-RMS stream the inner normalization acts as the identity, and
Lemma~\ref{lem:inner-norm} in Appendix~\ref{app:proofs} shows that it does
not enlarge the visit-wise sensitivity factors: the analysis below applies
verbatim to Eq.~(\ref{eq:deeploop-block}) with or without the inner
normalization. Its role is architectural rather than analytical: by
construction it presents every visit with a unit-RMS branch input, whatever
the residual-stream scale does during training, which is the branch-input
regime under which the visit-wise constants of
Assumption~\ref{ass:local-sensitivity} below are calibrated. The gain $\beta$ is a
per-matrix initialization gain applied to the residual-branch matrices specified
by DeepNorm. It should not be read as the standard deviation of the whole
sublayer output, since a branch may contain multiple scaled matrices. Concretely,
for each physical sublayer $j$, let $\mathcal{S}_j$ be the set of matrices that
receive the DeepNorm gain. DeepLoop initializes
\begin{equation}
  W_{j,q}^{(0)}
  =
  \beta_{\mathrm{DL}}\,
  \widetilde W_{j,q}^{(0)},
  \qquad
  q\in\mathcal{S}_j,
  \qquad
  \beta_{\mathrm{DL}}=(8N)^{-1/2}.
  \label{eq:deeploop-beta-usage}
\end{equation}
The forward recurrence of Eq.~(\ref{eq:deeploop-block}) then uses these
initialized parameters at every visit; $\beta$ is not re-applied as a
separate multiplicative factor on each visit. With
$\alpha_{\mathrm{DL}}=(2N)^{1/2}$, the DeepLoop update-to-residual ratio is
\begin{equation}
  \frac{\beta_{\mathrm{DL}}}{\alpha_{\mathrm{DL}}}
  =
  \frac{(8N)^{-1/2}}{(2N)^{1/2}}
  =
  \frac{1}{4N}.
  \label{eq:deeploop-beta-over-alpha-setup}
\end{equation}

\begin{assumption}[DeepNorm-scale local sensitivity]
\label{ass:local-sensitivity}
For each unrolled visit $i=(r,j)$, the norm of the linearized output map from a
residual-branch parameter perturbation at that visit and the norm of the corresponding visit-wise effective update are both $O(\beta/\alpha)$.
\end{assumption}

Assumption~\ref{ass:local-sensitivity} is the local scaling condition used in
the DeepNet perturbation argument. Constants depending on width, normalization
gain, attention heads, learning rate, or optimizer preconditioning are absorbed
into the $O(\cdot)$ notation; by Lemma~\ref{lem:inner-norm}, on the unit-RMS
stream of the initialized model the inner
normalization of Eq.~(\ref{eq:deeploop-block}) contributes to these constants
only factors of operator norm one. The looped analysis below changes
only how the visit-wise terms are aggregated when parameters are tied.

\subsection{Forward stability}
\label{sec:method:fwd}

RMSNorm restores unit RMS at every sublayer, so the forward signal scale is
not the binding constraint; the relevant constraint is the first-order
sensitivity of the final output to a residual-branch parameter update. A
direct expansion of $\mathrm{RMSNorm}(\alpha\mathbf{x}+\mathbf{z})$ around
$\mathrm{RMS}(\mathbf{z})/\alpha\to 0$ shows that the residual branch enters
the normalized direction through a factor of $1/\alpha$, which is why the
relevant DeepNorm ratio is $\beta/\alpha$ rather than $\beta$ alone. The
inner normalization of Eq.~(\ref{eq:deeploop-block}) complements this
mechanism on the input side: it pins the branch input to unit RMS at every
visit, so the branch-output scale is governed by the $\beta$-scaled matrices
alone. The
formal statement and proof are
Lemma~\ref{lem:rmsnorm-local} in Appendix~\ref{app:proofs}.

\subsection{Depth-untied DeepNorm bound}
\label{sec:method:bwd}

The DeepNet perturbation argument bounds the first-order output change
$\Delta F=F(\mathbf{x};\theta+\delta\theta)-F(\mathbf{x};\theta)$ by summing
visit-wise sensitivities. Here \emph{depth-untied} means that residual-sublayer
parameters are not shared across unrolled depth. This is orthogonal to the
standard tying of input and output token embeddings: tied embeddings, when used,
are outside the $M=2N$ residual-sublayer count and are not among the
residual-branch matrices scaled by $\beta$. Under
Assumption~\ref{ass:local-sensitivity}, each visit contributes an output
sensitivity $O(\beta/\alpha)$ and an effective update $O(\beta/\alpha)$, so
summing $M=2N$ depth-untied residual-sublayer visits gives
\begin{equation}
  \|\Delta F\|
  \le
  C'\,M\left(\frac{\beta}{\alpha}\right)^2,
  \label{eq:update-bound}
\end{equation}
and a sufficient first-order stability condition is
\begin{equation}
  M\left(\frac{\beta}{\alpha}\right)^2 = O(1).
  \label{eq:deepnet-condition}
\end{equation}
The DeepNorm choice $\alpha=(2N)^{1/4}$ and $\beta=(8N)^{-1/4}$ satisfies
$M(\beta/\alpha)^2 = 1/2$. The formal proposition statement and proof are
Proposition~\ref{prop:untied} in Appendix~\ref{app:proofs}.

\subsection{Loop correction: aligned visits require \texorpdfstring{$p=1/2$}{p=1/2}}
\label{sec:method:loop}

Depth-wise residual-sublayer weight sharing changes Eq.~(\ref{eq:update-bound}).
For a physical sublayer $j$, let $G_{r,j}$ be the visit-wise effective update
that would be applied if visit $(r,j)$ had its own depth-untied residual-branch
parameter, and let $U_{r,j}$ be the corresponding linearized
output-sensitivity operator. With tied residual-sublayer parameters, the
update to $\phi_j$ is proportional to $\sum_{r=1}^{R}G_{r,j}$, and the output
perturbation reads this same update through all visits. The first-order tied
perturbation therefore has the schematic double-sum form
\begin{equation}
  \Delta F_{\mathrm{tied}}
  = -\eta \sum_{j=1}^{J}
  \left(\sum_{r=1}^{R} U_{r,j}\right)
  \left(\sum_{t=1}^{R} G_{t,j}\right) + O(\eta^2),
  \label{eq:shared-grad}
\end{equation}
where tensor contractions and optimizer-dependent constants are absorbed into
$U_{r,j}$ and $G_{r,j}$.

Define the visit-alignment coefficient
\begin{equation}
  \kappa_R
  :=
  \max_{j\in\{1,\ldots,J\}}
  \frac{
    \left\|\sum_{r=1}^{R} U_{r,j}\right\|
    \left\|\sum_{r=1}^{R} G_{r,j}\right\|
  }{
    R C_U C_G(\beta/\alpha)^2
  },
  \label{eq:kappa-def}
\end{equation}
where $\|U_{r,j}\|\le C_U\beta/\alpha$ and
$\|G_{r,j}\|\le C_G\beta/\alpha$. The triangle inequality gives
$0\le\kappa_R\le R$. Independent or nearly orthogonal visits give
$\kappa_R=O(1)$, while fully aligned visits give $\kappa_R=\Theta(R)$.

Substituting the definition of $\kappa_R$ from Eq.~(\ref{eq:kappa-def}) into
Eq.~(\ref{eq:shared-grad}) and applying submultiplicativity gives the looped
version of the DeepNorm bound,
\begin{equation}
  \|\Delta F\|
  \le
  C''\,M\kappa_R\left(\frac{\beta}{\alpha}\right)^2,
  \label{eq:loop-update-bound}
\end{equation}
so a sufficient tied-depth stability condition is
\begin{equation}
  M\kappa_R\left(\frac{\beta}{\alpha}\right)^2 = O(1).
  \label{eq:loop-condition}
\end{equation}
In the worst-case aligned regime $\kappa_R=\Theta(R)$, this reduces to
$MR(\beta/\alpha)^2=O(1)$. The formal proposition and the aligned-case
corollary are Proposition~\ref{prop:tied-perturbation} and
Corollary~\ref{cor:aligned-loop} in Appendix~\ref{app:proofs}.

Now consider the family $\alpha=(cN)^p$, $\beta=(dN)^{-p}$ with constants
$c,d>0$. Its update-to-residual ratio is
\begin{equation}
  \frac{\beta}{\alpha}
  =
  (cd)^{-p}N^{-2p}.
  \label{eq:beta-over-alpha}
\end{equation}

\begin{proposition}[Exponent threshold]
\label{prop:exponent-threshold}
Assume $K$ is fixed, $N=KR$, $M=2N$, and
$\kappa_R=\Theta(R^\gamma)$ for some $\gamma\in[0,1]$. For the scaling family
$\alpha=(cN)^p$, $\beta=(dN)^{-p}$, the tied-depth condition
\[
  M\kappa_R(\beta/\alpha)^2=O(1)
\]
holds uniformly as $R\to\infty$ if and only if
\[
  p\ge \frac{1+\gamma}{4}.
\]
\end{proposition}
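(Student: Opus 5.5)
The plan is to substitute everything into the quantity $Q(R):=M\kappa_R(\beta/\alpha)^2$ and reduce the claim to comparing a single power of $R$ with zero.

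\textbf{Step 1: reduce $Q(R)$ to a power of $R$.} Using $M=2N=2KR$ with $K$ fixed, and Eq.~(\ref{eq:beta-over-alpha}), we get
\[
  Q(R)=2KR\,\kappa_R\,(cd)^{-2p}(KR)^{-4p}.
\]
Since $\kappa_R=\Theta(R^\gamma)$, there exist constants $0<a\le b$ and $R_0$ such that $aR^\gamma\le\kappa_R\le bR^\gamma$ for all $R\ge R_0$. Collecting the factors that do not depend on $R$ into $C:=2K^{1-4p}(cd)^{-2p}>0$, this gives the two-sided bound
\[
  aC\,R^{1+\gamma-4p}\le Q(R)\le bC\,R^{1+\gamma-4p},\qquad R\ge R_0 .
\]

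\textbf{Step 2: sufficiency.} Suppose $p\ge(1+\gamma)/4$. Then the exponent $1+\gamma-4p$ is at most $0$, so $R^{1+\gamma-4p}\le1$ for $R\ge1$. The upper bound gives $Q(R)\le bC$ for all $R\ge R_0$. The finitely many values $R<R_0$ are each finite, so they do not affect $O(1)$. Hence $Q(R)=O(1)$ uniformly as $R\to\infty$.

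\textbf{Step 3: necessity.} Suppose instead $p<(1+\gamma)/4$. Then the exponent $1+\gamma-4p$ is strictly positive. The lower bound gives $Q(R)\ge aC\,R^{1+\gamma-4p}\to\infty$, so $Q(R)$ is not $O(1)$. This is the contrapositive of the ``only if'' direction.

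\textbf{Main obstacle.} The only delicate point is interpretive. Necessity uses the \emph{lower} half of $\kappa_R=\Theta(R^\gamma)$, so $\Theta$ must be read as a genuine two-sided bound, not merely $O$. I will also state explicitly that $c,d,K$ are fixed, so that $C$ is a positive constant independent of $R$. Finally, ``holds'' refers to the tied-depth condition itself, i.e.\ boundedness of $Q(R)$, and not to the stability of training. That condition is only sufficient for stability by Eq.~(\ref{eq:loop-condition}), so the ``only if'' applies to the condition and not to stability.
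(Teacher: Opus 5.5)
Your proof is correct and takes the same route as the paper: substitute $M=2KR$ and $\beta/\alpha=(cd)^{-p}N^{-2p}$, reduce the condition to $\Theta(R^{1+\gamma-4p})$ (the paper writes this as $\Theta(N^{1+\gamma-4p})$ via $R=N/K$), and conclude boundedness iff $1+\gamma-4p\le 0$. Your explicit constants $a,b,R_0$ spell out the paper's implicit use of both halves of $\kappa_R=\Theta(R^\gamma)$, and your remark that the ``only if'' concerns the sufficient condition rather than training stability itself is a correct caveat.
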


Proposition~\ref{prop:exponent-threshold} shows where the new exponent comes
from. When visits decorrelate, $\gamma=0$, and the DeepNorm threshold $p=1/4$
is recovered. In the conservative aligned case, $\gamma=1$, so the threshold is
$p=1/2$. Equivalently,
\begin{equation}
  M R\left(\frac{\beta}{\alpha}\right)^2
  =
  2NR\,(cd)^{-2p}N^{-4p}.
  \label{eq:p-family-loop}
\end{equation}
At fixed $K$, $R=N/K$, so Eq.~(\ref{eq:p-family-loop}) is bounded only when
$p\ge 1/2$. DeepNorm's $p=1/4$ leaves a residual growth of $\Theta(R)$ in the
aligned shared-loop bound. The DeepLoop choice is
\begin{align*}
  \alpha=(2N)^{1/2},
  \qquad
  \beta=(8N)^{-1/2}.
\end{align*}
We keep the DeepNorm constants $(c,d)=(2,8)$ for a two-sublayer decoder block
and change only the exponent. Equivalently, DeepLoop uses the same per-matrix
definition as Eq.~(\ref{eq:deeploop-beta-usage}) with
$\beta=\beta_{\mathrm{DL}}$. With this choice,
\[
  \frac{\beta}{\alpha}
  =
  \frac{1}{4N},
  \qquad
  MR\left(\frac{\beta}{\alpha}\right)^2
  =
  2NR\cdot \frac{1}{16N^2}
  =
  \frac{1}{8K},
\]
so the worst-case aligned tied-depth bound remains $O(1)$ at fixed physical
depth $K$. Proofs of Proposition~\ref{prop:tied-perturbation} and
Proposition~\ref{prop:exponent-threshold} are in Appendix~\ref{app:proofs}.
Appendix~\ref{app:psweep} sweeps $p$ over a single grid at $R{=}3$ and
finds an empirical training-stability transition that brackets this
$p{=}1/2$ prediction.

\subsection{Summary and comparison to DeepNorm}
\label{sec:method:summary}

\begin{center}
\begin{tabular}{lcc}
  \toprule
  & DeepNorm & DeepLoop \\
  \midrule
  Residual scale $\alpha$
    & $(2N)^{1/4}$ & $(2N)^{1/2}$ \\
  Init gain $\beta$
    & $(8N)^{-1/4}$ & $(8N)^{-1/2}$ \\
  Per-matrix use of $\beta$
    & $W^{(0)}\leftarrow\beta\,\widetilde W^{(0)}$
    & $W^{(0)}\leftarrow\beta\,\widetilde W^{(0)}$ \\
  Update-to-residual $\beta/\alpha$
    & $1/(2\sqrt{N})$ & $1/(4N)$ \\
  Untied bound $M(\beta/\alpha)^2$
    & $\Theta(1)$ & $\Theta(N^{-1})$ \\
  Aligned tied-loop bound $MR(\beta/\alpha)^2$ at fixed $K$
    & $\Theta(R)$ & $\Theta(1/K)$ \\
  \bottomrule
\end{tabular}
\end{center}

DeepLoop is therefore a one-line conservative correction for tied depth:
increase the exponent from $p=1/4$ to $p=1/2$. It introduces no gates,
learnable residual scalars, or additional tuning constants; the correction
itself touches only the exponent. The two normalizations per sublayer in
Eq.~(\ref{eq:deeploop-block}) are the standard RMSNorms already used by
recurrent-depth models \citep{geiping2025scaling}, and
Lemma~\ref{lem:inner-norm} shows the inner one is inert to the analysis. The correction is not a claim that every looped model empirically attains worst-case alignment. Rather, it is the minimal exponent for uniform boundedness under aligned visit-wise contributions at fixed physical depth.

\section{Application to Hierarchical Recurrent Reasoners}
\label{sec:hrm}

The DeepLoop derivation in Section~\ref{sec:method} treats a single looped
block visited for $R$ rounds. Recent recurrent-reasoning architectures combine
looping with two further structural choices: a hierarchy of recurrent modules
executed at different rates, and training-time gradient truncation across the
outer recurrence \citep{wang2025hrm,geiping2025scaling}. The Hierarchical
Reasoning Model \citep{wang2025hrm} instantiates both: a high-level module
$\mathcal{H}$ and a low-level module $\mathcal{L}$ are unrolled jointly, and 
gradients are computed through only the last outer cycle. This section shows
that the framework of Section~\ref{sec:method} extends to this regime by
replacing the unrolled visit count $M$ with a gradient-visible visit count
$M_{\mathrm{g}}$ and tracking per-module visit counts separately. We do not
introduce new optimization machinery or a new variant of DeepLoop; we describe
how the existing analysis specializes when training-time depth and forward
depth differ.

\subsection{Setup: two-module hierarchical loops}
\label{sec:hrm:setup}

Let $\mathcal{H}$ contain $K_H$ physical blocks (each with two residual
sublayers, attention and MLP) and let $\mathcal{L}$ contain $K_L$ physical
blocks of the same shape. The hierarchical blocks use the post-normalized
recurrence in the form of Eq.~(\ref{eq:deeploop-block}) without the inner
branch normalization, which is the form for which the propositions of
Appendix~\ref{app:proofs} are stated directly, so the analysis applies
unchanged.
The hierarchical recurrence runs $C$ outer cycles;
within each outer cycle, $\mathcal{L}$ iterates $C_L$ times before a single
update of $\mathcal{H}$:
\[
  \text{cycle } c:\quad
  \mathbf{z}_L \leftarrow \mathcal{L}^{(C_L)}(\mathbf{z}_L,\mathbf{z}_H),\qquad
  \mathbf{z}_H \leftarrow \mathcal{H}(\mathbf{z}_H,\mathbf{z}_L),
  \quad c=1,\ldots,C.
\]
Counting attention and MLP residual sublayers, the total unrolled visit count
is
\begin{equation}
  M
  = 2\,C\,(K_H + C_L K_L),
  \label{eq:hrm-forward-M}
\end{equation}
which generalizes the single-module count $M=2KR$ used in
Section~\ref{sec:method}. The set of distinct physical residual sublayers has
size $J = J_H + J_L$ with $J_H = 2K_H$ and $J_L = 2K_L$. Each visit
$i=(c,\ell,j)$ has an outer-cycle index $c$, an inner $\mathcal{L}$-iterate
index $\ell$ (only meaningful for $j\in\mathcal{L}$), and a physical-sublayer
index $j$.

\subsection{Loop-aware bound under one-step gradient approximation}
\label{sec:hrm:gradtrunc}

Hierarchical recurrent reasoners are typically trained with a one-step
gradient approximation: the forward pass uses all $C$ outer cycles, but the
backward pass is computed against the last cycle only, with $\mathbf{z}_H$
and $\mathbf{z}_L$ from earlier cycles detached from the autograd graph. This truncation does not alter the numerical forward values, so Lemma~\ref{lem:rmsnorm-local} applies unchanged locally. The perturbation bound below is for the truncated training graph: the states entering the last outer cycle are treated as fixed inputs, and both the sensitivity terms and the update terms from earlier cycles are absent. A perturbation bound for the full untruncated forward map after an optimizer step would require an additional forward-visible read factor.

Define the gradient-visible visit count
\begin{equation}
  M_{\mathrm{g}}
  := 2\,(K_H + C_L K_L),
  \label{eq:hrm-Mg}
\end{equation}
so $M = C\,M_{\mathrm{g}}$. The per-physical-sublayer round counts on the
gradient-visible cycle are
\begin{equation}
  R_{\mathrm{g}}^{(\mathcal{H})} = 1,
  \qquad
  R_{\mathrm{g}}^{(\mathcal{L})} = C_L.
  \label{eq:hrm-per-module-R}
\end{equation}
Repeating the argument leading to Eq.~(\ref{eq:loop-update-bound}) but
restricting the visit-index sums to the gradient-visible cycle gives the
two-module loop-aware bound
\begin{equation}
  \|\Delta F\|
  \le
  C''\!
  \left[
    J_H\,R_{\mathrm{g}}^{(\mathcal{H})}\,\kappa_{\mathrm{g}}^{(\mathcal{H})}
    +
    J_L\,R_{\mathrm{g}}^{(\mathcal{L})}\,\kappa_{\mathrm{g}}^{(\mathcal{L})}
  \right]
  \left(\frac{\beta}{\alpha}\right)^2,
  \label{eq:hrm-per-module-bound}
\end{equation}
where $\kappa_{\mathrm{g}}^{(\mathcal{H})},\kappa_{\mathrm{g}}^{(\mathcal{L})}$
are the visit-alignment coefficients of Eq.~(\ref{eq:kappa-def}) restricted
to each module's gradient-visible visits. Two structural facts simplify
Eq.~(\ref{eq:hrm-per-module-bound}). First,
$\kappa_{\mathrm{g}}^{(\mathcal{H})}\le 1$ trivially: with
$R_{\mathrm{g}}^{(\mathcal{H})}=1$, each physical $\mathcal{H}$-sublayer
contributes a single-term inner sum, and the alignment coefficient collapses
to a constant. The truncation thus places $\mathcal{H}$ in the
untied-DeepNorm regime even though $\mathcal{H}$ is forward-visited $C$
times. Second, $\kappa_{\mathrm{g}}^{(\mathcal{L})}\in[0,C_L]$ by the
triangle inequality applied to the inner $\mathcal{L}$-iterate sum.
Substituting these facts, upper-bounding $\kappa_{\mathrm{g}}^{(\mathcal{H})}$ by one, and applying the per-module counts in
Eq.~(\ref{eq:hrm-per-module-R}) yields the sufficient stability condition
\begin{equation}
  M_{\mathrm{g}}\,\bar\kappa_{\mathrm{g}}\,
  \left(\frac{\beta}{\alpha}\right)^2
  = O(1),
  \qquad
  \bar\kappa_{\mathrm{g}}
  :=
  \frac{J_H + J_L\,C_L\,\kappa_{\mathrm{g}}^{(\mathcal{L})}}{M_{\mathrm{g}}},
  \label{eq:hrm-cond}
\end{equation}
which has the same structural form as Eq.~(\ref{eq:loop-condition}) with $M$
replaced by $M_{\mathrm{g}}$ and $\kappa_R$ replaced by the gradient-visible
aggregate $\bar\kappa_{\mathrm{g}}\in[\,J_H/M_{\mathrm{g}},\,1+J_L C_L(C_L-1)/M_{\mathrm{g}}\,]$.

The decomposition (\ref{eq:hrm-per-module-bound}) also licenses a per-module
scaling family. Each summand depends only on the $(\alpha,\beta)$ used at
the corresponding module, so a hierarchical reasoner may be parameterized
with $(\alpha_{\mathcal{H}},\beta_{\mathcal{H}})$ and
$(\alpha_{\mathcal{L}},\beta_{\mathcal{L}})$ chosen independently. The
admissible exponent at each module is determined by the corresponding
summand of Eq.~(\ref{eq:hrm-per-module-bound}), not by the global visit
count.

\subsection{Predicted exponent regime}
\label{sec:hrm:exp}

Let $N_{\mathrm{g}}:=M_{\mathrm{g}}/2$ denote the gradient-visible
block-equivalent depth. Combining Eq.~(\ref{eq:hrm-cond}) with the same
two-sublayer scaling family used above, $\alpha=(c N_{\mathrm{g}})^p$, $\beta=(d N_{\mathrm{g}})^{-p}$, gives, up to constant factors,
\begin{equation}
  M_{\mathrm{g}}\,\bar\kappa_{\mathrm{g}}\,
  (cd)^{-2p}\,N_{\mathrm{g}}^{-4p}
  =
  \Theta\!\left((cd)^{-2p}\,\bar\kappa_{\mathrm{g}}\,M_{\mathrm{g}}^{1-4p}\right),
  \label{eq:hrm-pthreshold}
\end{equation}
and the asymptotic threshold for boundedness depends on how
$\bar\kappa_{\mathrm{g}}$ and $M_{\mathrm{g}}$ co-scale. We record the two
limit cases that mirror Proposition~\ref{prop:exponent-threshold}.

\paragraph{Decorrelated inner cycles
($\kappa_{\mathrm{g}}^{(\mathcal{L})}=O(1)$).}
Then $\bar\kappa_{\mathrm{g}}=O(1)$ and Eq.~(\ref{eq:hrm-pthreshold}) is
bounded as $M_{\mathrm{g}}\to\infty$ if and only if
$  p \ge 1/4.$
Both modules sit in the untied-DeepNorm regime: the $\mathcal{H}$ module by
truncation, and the $\mathcal{L}$ module by the inner-cycle decorrelation.

\paragraph{Aligned inner cycles
($\kappa_{\mathrm{g}}^{(\mathcal{L})}=\Theta(C_L)$).}
Then the $\mathcal{L}$ contribution to $\bar\kappa_{\mathrm{g}}$ is $\Theta(C_L^2\,J_L/M_{\mathrm{g}})$. Two
sub-cases follow from how $M_{\mathrm{g}}$ is grown.
\emph{(i)} If $C_L$ is fixed and $K_H,K_L\to\infty$, then
$\bar\kappa_{\mathrm{g}}=\Theta(1)$ and the threshold is again
$p\ge 1/4$.
\emph{(ii)} If $K_H,K_L$ are fixed and $C_L\to\infty$, then
$M_{\mathrm{g}}=\Theta(C_L)$ and
$\bar\kappa_{\mathrm{g}}=\Theta(C_L)$, so
Eq.~(\ref{eq:hrm-pthreshold}) is bounded if and only if
$  p \ge 1/2,$
which recovers the fixed-physical-depth tied-loop threshold of
Proposition~\ref{prop:exponent-threshold} along the inner-cycle axis.

For a hierarchical recurrent reasoner as actually built, these two limits are
not equally likely possibilities but a binding constraint and a slack one. The
once-per-step $\mathcal{H}$ module is truncated by the one-step gradient,
contributes a single update term, and therefore requires only $p\ge 1/4$. The
$\mathcal{L}$ module is the one that realizes the model's effective depth: it
revisits its shared blocks at fixed physical depth, the $C_L$ axis of
sub-case~(ii), and weight tying is adopted precisely so that those revisits
implement the \emph{same} operation, which aligns their visit-wise gradients
and sensitivities ($\kappa_{\mathrm{g}}^{(\mathcal{L})}=\Theta(C_L)$). The
$\mathcal{L}$ module therefore requires $p\ge 1/2$. A single shared residual
exponent must satisfy the more demanding module, so the framework predicts
\begin{equation}
  p = 1/2
  \label{eq:hrm-p-prediction}
\end{equation}
for a hierarchical recurrent reasoner, the same loop-aware exponent as the
single-module backbone of Section~\ref{sec:method}. This identifies the
binding regime for the architecture as trained, in the same worst-case sense
as Section~\ref{sec:method}: we do not claim that the alignment
$\kappa_{\mathrm{g}}^{(\mathcal{L})}=\Theta(C_L)$ is empirically attained,
but the recurrent depth that makes the model work is grown along the aligned
inner-cycle axis, exactly the regime whose worst-case threshold is $1/2$, and
$p=1/2$ simultaneously satisfies the
once-per-step module (for which $1/2\ge 1/4$). Section~\ref{sec:experiments} confirms the prediction empirically on ARC-AGI.

\paragraph{Forward signal at the loop entry.}
A practical detail of hierarchical reasoners is the addition of a learned
task or puzzle embedding $\mathbf{e}_{\mathrm{task}}$ to the token embedding
$\mathbf{e}_{\mathrm{tok}}$ before the loop. The choice of whether to apply
RMSNorm to $\mathbf{e}_{\mathrm{tok}}+\mathbf{e}_{\mathrm{task}}$ at the loop
entry affects only the forward signal scale $\mathrm{RMS}(\mathbf{x}_0)$ in
Lemma~\ref{lem:rmsnorm-local}; it does not enter
Eq.~(\ref{eq:hrm-per-module-bound}) or the threshold derivation
(\ref{eq:hrm-pthreshold}). Input-side normalization and the residual scaling rule are therefore separable design choices within this framework, even
though both can independently influence the relative magnitude of task-conditioning information across repeated visits.

\subsection{Summary}
\label{sec:hrm:summary}

The DeepLoop perturbation argument extends to hierarchical recurrent
reasoners by two substitutions. When training uses a one-step gradient
approximation, the visit count entering the bound is the gradient-visible
$M_{\mathrm{g}}$ from Eq.~(\ref{eq:hrm-Mg}) rather than the forward $M$ from
Eq.~(\ref{eq:hrm-forward-M}). When the architecture splits the recurrence
across modules with different per-step visit counts, the bound decomposes
into per-module summands and admits asymmetric per-module
$(\alpha,\beta)$ assignments. Because a hierarchical recurrent reasoner grows
its effective depth by aligned revisits of the shared inner module at fixed
physical depth, the binding threshold from Eq.~(\ref{eq:hrm-pthreshold}) is
$p=1/2$, the same loop-aware exponent as the single-module backbone.
Section~\ref{sec:experiments} confirms this on ARC-AGI: the DeepLoop exponent
$p=1/2$ improves voted accuracy over the vanilla HRM baseline across the full
evaluation ladder.

\section{Experiments}
\label{sec:experiments}

\subsection{Validation loss}
\label{sec:experiments:valloss}

We compare DeepLoop against a baseline that already incorporates the same
looped residual-block sharing, tied input-output embeddings, and an
input-embedding RMSNorm. The baseline (\texttt{base}) is the standard
pre-normalized looped Transformer: each sublayer applies
$\mathbf{x}\leftarrow\mathbf{x}+g(\mathrm{Norm}(\mathbf{x}))$ with standard
initialization and no residual scaling. The DeepLoop variant instead uses the
post-normalized sandwich block of Eq.~(\ref{eq:deeploop-block}) with the
scaling rule enabled on every block. The comparison is therefore between the
modern pre-LN default and the complete DeepLoop parameterization; the
stability effect of the exponent alone, within the fixed sandwich block of
Eq.~(\ref{eq:deeploop-block}), is probed by the
$p$-sweep of Appendix~\ref{app:psweep}. Both configurations share the same
attention and MLP backbone, optimizer, data pipeline, and random seed. The
loop count $R \in \{1,3,5,7\}$ moves between rows of
Table~\ref{tab:loop-valloss}, and the two method rows differ exactly in the
block parameterization: the pre-LN block with standard initialization versus
Eq.~(\ref{eq:deeploop-block}) with the DeepLoop scaling rule.

The GPT-2 small runs use a GPT-MHA-RoPE model trained on
FineWeb-Edu for 50B tokens (100K optimizer steps, context length 1024,
global batch size 480) on 4$\times$H200~141GB GPUs. The GPT-2 medium
runs use the same backbone with hidden size enlarged from $768$ to
$1024$ and the number of layers doubled from $12$ to $24$, trained on
the same data and schedule on 8$\times$H200~141GB GPUs. We report the
validation cross-entropy at the final checkpoint
for both scales in Table~\ref{tab:loop-valloss}.

\begin{table}[ht]
  \centering
  \small
  \caption{Final validation loss at step 100{,}000 on FineWeb-Edu 50BT for
    the GPT-2 small and GPT-2 medium backbones, varying loop count $R \in \{1,3,5,7\}$.
    Both methods share tied input-output embeddings and an input-embedding
    RMSNorm; the baseline uses the standard pre-LN block.
    Bold marks the best cell per column per scale; ties within $0.002$\,nats are bolded on both rows. DeepLoop is effectively tied with the baseline at $R{=}1$ and improves over it in these single-seed runs at $R{=}3$, $R{=}5$, and $R{=}7$ at both scales.}
  \label{tab:loop-valloss}
  \begin{tabular}{lcccc}
    \toprule
    Method & $R=1$ & $R=3$ & $R=5$ & $R=7$ \\
    \midrule
    \multicolumn{5}{l}{\emph{GPT-2 small backbone}} \\
    baseline (pre-LN) & \textbf{2.8627} & 2.8077 & 2.7910 & 2.7700 \\
    DeepLoop                            & \textbf{2.8631} & \textbf{2.7917} & \textbf{2.7679} & \textbf{2.7514} \\
    $\Delta$ (DeepLoop $-$ base)        & $+0.0004$ & $-0.0160$ & $-0.0231$ & $-0.0186$ \\
    \midrule
    \multicolumn{5}{l}{\emph{GPT-2 medium backbone}} \\
    baseline (pre-LN) & \textbf{2.6253} & 2.5779 & 2.5640 & 2.5558 \\
    DeepLoop                            & \textbf{2.6264} & \textbf{2.5627} & \textbf{2.5444} & \textbf{2.5280} \\
    $\Delta$ (DeepLoop $-$ base)        & $+0.0011$ & $-0.0153$ & $-0.0196$ & $-0.0278$ \\
    \bottomrule
  \end{tabular}
\end{table}

Figure~\ref{fig:loop-valloss} visualizes the same numbers as
Table~\ref{tab:loop-valloss}. At both scales the two curves coincide
within noise at $R{=}1$ ($+0.0004$ and $+0.0011$\,nats) and separate as
the loop count grows: at the small scale DeepLoop pulls ahead by
$-0.016$\,nats at $R{=}3$, the gap peaks at $-0.023$\,nats at $R{=}5$,
and remains open at $-0.019$\,nats at $R{=}7$; at the medium scale the
gap widens monotonically through $R{=}7$, reaching $-0.028$\,nats. Both
methods continue to improve monotonically with $R$ at both scales, and
DeepLoop strictly beats the baseline at every $R\geq3$. Multi-seed
runs would be needed to quantify run-to-run variance.

\begin{figure}[ht]
  \centering
  \includegraphics[width=\linewidth]{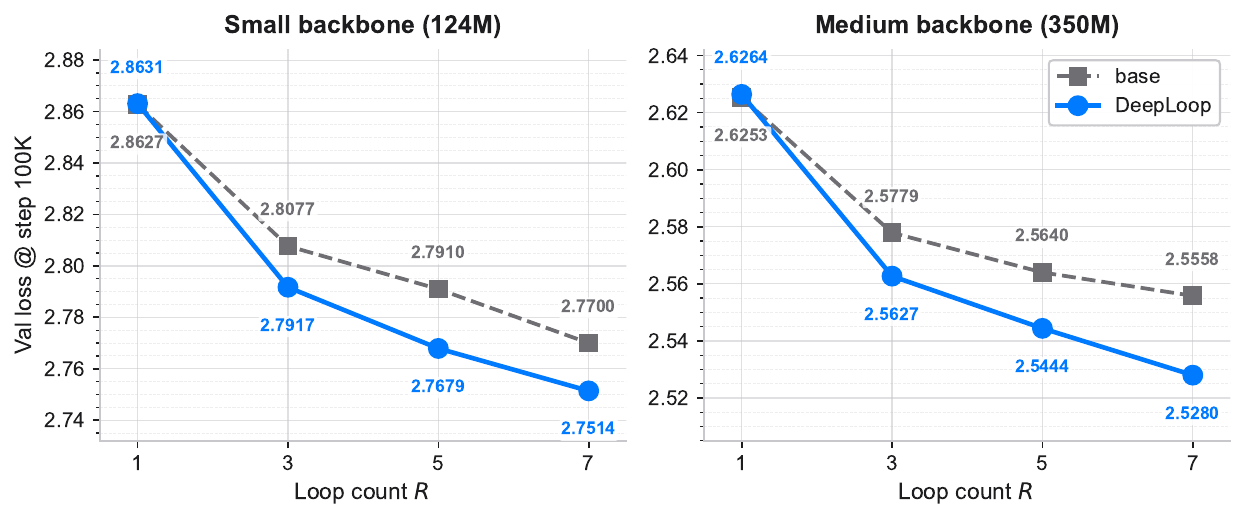}
  \caption{Final validation loss (step 100{,}000) against loop count
    $R\in\{1,3,5,7\}$ on FineWeb-Edu 50BT for the GPT-2 small (left) and
    GPT-2 medium (right) backbones. DeepLoop matches the baseline at
    $R{=}1$ and improves over it at every $R\geq3$ at both scales in
    these single-seed runs.
    Data are the same as Table~\ref{tab:loop-valloss}.}
  \label{fig:loop-valloss}
\end{figure}

\subsection{Downstream evaluation on the \texttt{lm-evaluation-harness} suite}
\label{sec:experiments:downstream}

To test whether the validation-loss advantage transfers to downstream
task accuracy, we evaluate the eight medium-scale checkpoints from
\S\ref{sec:experiments:valloss} on the eight-task zero- and one-shot
suite used by \citet{zhang2026deep} in the Deep Delta Learning paper:
\texttt{arc\_challenge}, \texttt{arc\_easy}, \texttt{hellaswag},
\texttt{openbookqa}, \texttt{piqa}, \texttt{sciq}, \texttt{social\_iqa},
and \texttt{winogrande}. Evaluation uses the
\texttt{lm-evaluation-harness} \citep{eval-harness} with its default
\texttt{acc} metric (not \texttt{acc\_norm}), the GPT-2 tokenizer, a
context length of 1024 (matching training), and bf16 inference. The
reported \emph{Avg} column is the unweighted arithmetic mean of the
eight per-task accuracies, following the protocol of
\citet{zhang2026deep}.

Table~\ref{tab:downstream-medium} reports the per-task accuracy and
Avg for every (method, $R$, shot) cell at the medium scale.
Figure~\ref{fig:downstream-per-task} breaks the 1-shot column out
task-by-task. The corresponding GPT-2 small downstream results,
which exhibit the same qualitative pattern, are deferred to
Appendix~\ref{app:small-downstream}.

\begin{table}[ht]
  \centering
  \footnotesize
  \caption{Downstream accuracy (\%) for the GPT-2 medium backbone on
    FineWeb-Edu 50BT, mirroring Table~\ref{tab:downstream-small} but
    extended to $R{=}7$. Metric is \texttt{acc}; Avg is the unweighted
    arithmetic mean across the eight tasks. Bold marks the best cell
    per column per shot setting.}
  \label{tab:downstream-medium}
  \begin{tabular}{lcccccccc c}
    \toprule
    Method / $R$ & ARC-C & ARC-E & HellaSwag & OBQA & PIQA & SciQ & SIQA & WG & \textbf{Avg} \\
    \midrule
    \multicolumn{10}{l}{\emph{0-shot}} \\
    baseline $R{=}1$       & 29.44 & 65.24 & 38.12 & 23.60 & 69.80 & 85.80 & 40.23 & 54.70 & 50.86 \\
    baseline $R{=}3$       & 32.59 & 66.75 & 39.76 & 27.60 & 70.13 & 87.30 & 39.87 & 55.96 & 52.50 \\
    baseline $R{=}5$       & \textbf{33.53} & 68.31 & 41.01 & 25.80 & 69.70 & 88.20 & 38.95 & 55.41 & 52.61 \\
    baseline $R{=}7$       & 32.34 & 69.40 & 40.61 & 24.40 & \textbf{70.67} & 89.20 & 39.66 & 57.30 & 52.95 \\
    DeepLoop $R{=}1$   & 29.86 & 65.07 & 38.21 & 23.60 & 68.82 & 86.10 & 39.92 & 55.25 & 50.85 \\
    DeepLoop $R{=}3$   & 32.59 & 68.43 & 40.85 & 26.80 & 70.29 & 88.30 & 40.02 & 56.12 & 52.93 \\
    DeepLoop $R{=}5$   & \textbf{33.53} & 68.60 & 41.78 & \textbf{28.20} & 70.46 & 90.10 & 40.58 & 56.12 & 53.67 \\
    DeepLoop $R{=}7$   & 33.11 & \textbf{69.65} & \textbf{41.83} & 25.60 & 70.57 & \textbf{90.30} & \textbf{40.94} & \textbf{59.04} & \textbf{53.88} \\
    \midrule
    \multicolumn{10}{l}{\emph{1-shot}} \\
    baseline $R{=}1$       & 30.38 & 66.92 & 37.73 & 24.20 & 70.08 & 88.30 & 40.79 & 55.09 & 51.69 \\
    baseline $R{=}3$       & 33.70 & 68.69 & 39.71 & 26.00 & 70.13 & 90.50 & 40.74 & 55.33 & 53.10 \\
    baseline $R{=}5$       & 35.75 & 70.88 & 40.82 & 27.80 & \textbf{71.60} & 91.90 & 39.71 & 56.91 & 54.42 \\
    baseline $R{=}7$       & 34.22 & 70.54 & 40.87 & 27.80 & 71.33 & 92.10 & 41.30 & 58.80 & 54.62 \\
    DeepLoop $R{=}1$   & 30.80 & 65.78 & 37.73 & 22.80 & 69.10 & 89.50 & 41.30 & 55.25 & 51.53 \\
    DeepLoop $R{=}3$   & 33.11 & 68.14 & 40.59 & 26.20 & 69.48 & 91.30 & 41.15 & 57.38 & 53.42 \\
    DeepLoop $R{=}5$   & \textbf{36.43} & 68.18 & 41.50 & 26.00 & 70.84 & 92.10 & 40.99 & 57.46 & 54.19 \\
    DeepLoop $R{=}7$   & 35.15 & \textbf{70.96} & \textbf{41.83} & \textbf{28.80} & 70.51 & \textbf{92.40} & \textbf{42.53} & \textbf{59.43} & \textbf{55.20} \\
    \bottomrule
  \end{tabular}
\end{table}

\begin{figure}[ht]
  \centering
  \includegraphics[width=0.95\linewidth]{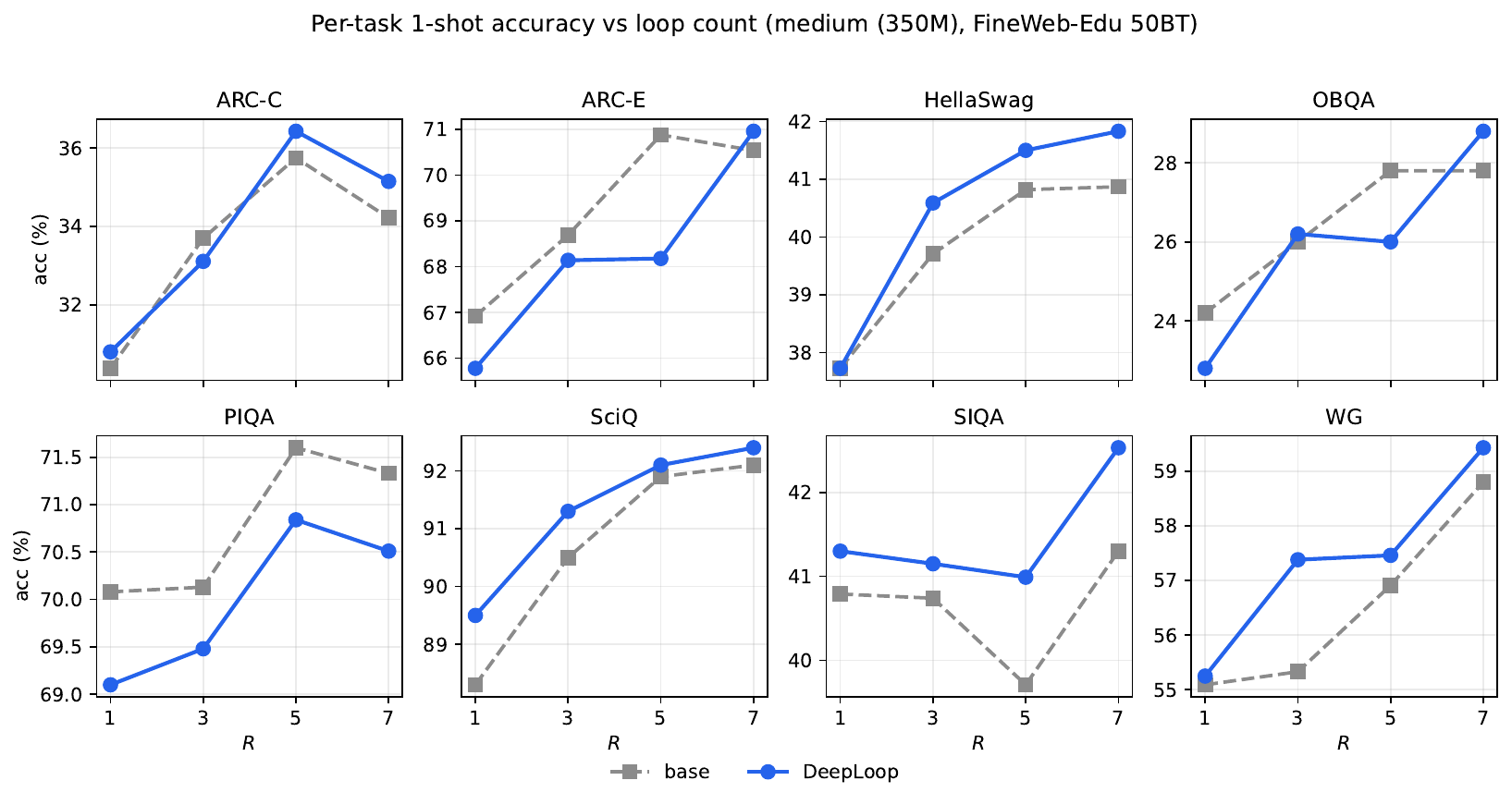}
  \caption{Per-task 1-shot accuracy against loop count $R$ for base
    versus DeepLoop on all eight tasks at the GPT-2 medium scale. At
    $R{=}7$ DeepLoop wins on seven of the eight tasks under both shot
    settings; see Table~\ref{tab:downstream-medium} for exact values.}
  \label{fig:downstream-per-task}
\end{figure}

Two takeaways. First, the qualitative picture from the small-scale
ablation (Appendix~Table~\ref{tab:downstream-small}) carries over to
the larger backbone: the two methods are tied at $R{=}1$
($\Delta=-0.01$ on the 0-shot Avg, $-0.16$ on 1-shot), DeepLoop opens
the gap at $R{=}3$ ($+0.43$ / $+0.32$), and the best Avg cell in the
table is DeepLoop $R{=}7$ in both shot settings, headlined by
$55.20\,\%$ on 1-shot. On the 0-shot Avg the gap opens at $R{=}3$ and
stays around a point at $R{=}5$ and $R{=}7$ ($+0.43$, $+1.06$, $+0.93$); on the 1-shot Avg the
ranking is messier at $R{=}5$ (where the base reaches $54.42\,\%$,
$0.23$ points above DeepLoop) but DeepLoop recovers the lead by
$+0.58$ points at $R{=}7$. Second, at $R{=}7$ DeepLoop beats the
same-$R$ baseline on seven of the eight individual tasks under both
shot settings (only PIQA goes the other way), with WinoGrande in
particular jumping by $+1.74$ points 0-shot ($59.04$ vs.\ $57.30$).
The same caveat about single-seed run-to-run variance from
\S\ref{sec:experiments:valloss} applies, we see no qualitative
disagreement with the small-scale ablation, only a stronger depth
signal.

\subsection{Reasoning evaluation on ARC-AGI}
\label{sec:experiments:arc}

To test the prediction of Eq.~(\ref{eq:hrm-p-prediction}) on a hierarchical
recurrent reasoner, we apply the DeepLoop scaling rule with $p=1/2$ to the
Hierarchical Reasoning Model (HRM) \citep{wang2025hrm} and evaluate on
ARC-AGI-1 \citep{chollet2019measure}. The only change from the published HRM is
the residual parameterization: every reasoning block uses $\alpha=(2N_{\mathrm{g}})^{1/2}$ and $\beta=(8N_{\mathrm{g}})^{-1/2}$ with $M_{\mathrm{g}}=2\,(K_H+C_LK_L)=24$ gradient-visible residual-sublayer visits for the published configuration ($K_H{=}K_L{=}4$ blocks per module, $C_L{=}2$ inner iterations) and $N_{\mathrm{g}}=M_{\mathrm{g}}/2=12$ block-equivalent depth (Eq.~\ref{eq:hrm-Mg}); the
backbone, the adaptive-computation halting, the optimizer (AdamATan2), the data
pipeline, and the 100K-epoch schedule are held fixed. Baseline and DeepLoop
runs train on an identical, hash-verified \texttt{arc-aug-1000} build and are
scored by the same count-first voting protocol on $N=400$ evaluation puzzles.
As a harness check, the published HRM ARC-2 checkpoint scores $5.00\%$
($K{=}2$) on this pipeline, matching the released value.

Table~\ref{tab:hrm-arc} reports voted accuracy at voting budgets
$K\in\{1,2,10,100,1000\}$. DeepLoop improves the paper-protocol two-vote
accuracy from $36.50\%$ to $39.75\%$ ($+3.25$\,pp) and improves every column of
the voting ladder. A four-seed control places the per-seed $K{=}2$ standard
deviation at $\approx 0.5$\,pp, so the $+3.25$\,pp gain is roughly a $6\sigma$
effect rather than a seed draw. The prediction that a tied hierarchical reasoner
sits at the aligned $p=1/2$ threshold, the same exponent that the worst-case
analysis prescribes for the single-module looped backbone
(Section~\ref{sec:method:loop}), is therefore
borne out: a single residual-scaling rule, $p=1/2$, is the correct setting for
both the looped language model and the hierarchical recurrent reasoner.

\begin{table}[ht]
  \centering
  \small
  \caption{Voted accuracy (\%) on ARC-AGI-1 for the vanilla Hierarchical
    Reasoning Model and the same model with the DeepLoop residual scaling
    ($p=1/2$), at voting budgets $K$. Both runs use 100K epochs, AdamATan2, and
    an identical hash-verified \texttt{arc-aug-1000} build; $N=400$ evaluation
    puzzles. Bold marks the better cell per column; the $K{=}2$ column is the
    paper-protocol headline metric.}
  \label{tab:hrm-arc}
  \begin{tabular}{lccccc}
    \toprule
    Method & $K{=}1$ & $K{=}2$ & $K{=}10$ & $K{=}100$ & $K{=}1000$ \\
    \midrule
    Vanilla HRM           & 31.50 & 36.50 & 41.50 & 47.50 & 50.75 \\
    DeepLoop ($p{=}1/2$)  & \textbf{35.50} & \textbf{39.75} & \textbf{44.25} & \textbf{49.75} & \textbf{51.50} \\
    $\Delta$              & $+4.00$ & $+3.25$ & $+2.75$ & $+2.25$ & $+0.75$ \\
    \bottomrule
  \end{tabular}
\end{table}

\section{Related Work}
\label{sec:related}

\paragraph{Looped and universal Transformers.}
Universal Transformers \citep{dehghani2018universal} and ALBERT
\citep{lan2019albert} introduced depth-wise parameter sharing as a way to
increase effective computation without increasing parameter count. Later work
studied partial sharing \citep{reid2021subformer}, looped Transformers for
in-context learning and algorithmic computation \citep{giannou2023looped,
yang2023looped,gatmiry2024can}, and recurrent-depth language models that trade
inference compute for quality \citep{geiping2025scaling}. DeepLoop is
complementary to these architectures: within the post-normalized sandwich
block of Eq.~(\ref{eq:deeploop-block}), it prescribes the residual and
initialization scales used when a physical block is revisited.

\paragraph{Adaptive computation and iterative reasoning.}
Looped depth can also be viewed as a differentiable test-time-compute
mechanism. Adaptive Computation Time \citep{graves2016adaptive}, PonderNet
\citep{banino2021pondernet}, and recurrent networks for algorithmic
extrapolation \citep{schwarzschild2021can,bansal2022end} all exploit repeated
application of shared computation. Recent language-modeling work similarly adds
latent or recurrent computation \citep{goyal2023think,hao2024training,
saunshi2025reasoning}. Our analysis targets the stability of this repeated
shared computation.

\paragraph{Residual scaling and parameterization.}
Transformer stability has been improved through normalization placement
\citep{xiong2020layer,nguyen2019transformers}, initialization and residual
scaling \citep{zhang2019fixup,huang2020improving,de2020batch,
bachlechner2021rezero,liu2020understanding}, and DeepNorm/DeepNet
\citep{wang2024deepnet}, which derives $\alpha=(2N)^{1/4}$ and
$\beta=(8N)^{-1/4}$ for untied Post-LN Transformers. Width-and-depth
parameterizations such as $\mu$P and Depth-$\mu$P prescribe complementary
scalings for feature learning and hyperparameter transfer
\citep{yang2021tensor,yang2021tuning,yang2023tensor,bordelon2023depthwise}.
These analyses do not explicitly account for the extra correlation introduced when the same residual-branch parameters are reused across loop visits.
DeepLoop is a loop-specific correction to the DeepNorm exponent.

\section{Conclusion}

Looped Transformers turn depth into a controllable compute resource, but repeated parameter reuse changes the residual-scaling problem. When a residual branch is revisited, its shared update is accumulated across visits and then read by those same visits in the next linearized forward pass. We captured this tied-depth effect with a visit-alignment coefficient $\kappa_R$, recovering the DeepNorm exponent for decorrelated visits and identifying $p=1/2$ as the conservative exponent for aligned visits at fixed physical depth.

DeepLoop implements this correction in the post-normalized sandwich block of
Eq.~(\ref{eq:deeploop-block}) with a one-line scaling rule,
$\alpha=(2N)^{1/2}$ and $\beta=(8N)^{-1/2}$. Empirically, it is neutral when no physical block is revisited and improves looped GPT-style language models as recurrent depth increases, with gains in validation loss and downstream accuracy. A $p$-sweep at $R{=}3$ further places the empirical stability transition in the vicinity of $p=1/2$.

Future work should measure $\kappa_R$ or cross-round gradient alignment directly, test whether the same boundary holds at larger scale or under alternative parameterizations, and explore whether training can encourage decorrelated visits to safely use less conservative exponents.

\section*{Acknowledgement}

We thank Zixuan Wang and Eric Song for their constructful feedback and insightful discussion.

\vspace{5ex}
\bibliographystyle{plainnat}
\bibliography{reference}

@misc{zhu2026scalinglatentreasoninglooped,
      title={Scaling Latent Reasoning via Looped Language Models}, 
      author={Rui-Jie Zhu and Zixuan Wang and Kai Hua and Tianyu Zhang and Ziniu Li and Haoran Que and Boyi Wei and Zixin Wen and Fan Yin and He Xing and Lu Li and Jiajun Shi and Kaijing Ma and Shanda Li and Taylor Kergan and Andrew Smith and Xingwei Qu and Mude Hui and Bohong Wu and Qiyang Min and Hongzhi Huang and Xun Zhou and Wei Ye and Jiaheng Liu and Jian Yang and Yunfeng Shi and Chenghua Lin and Enduo Zhao and Tianle Cai and Ge Zhang and Wenhao Huang and Yoshua Bengio and Jason Eshraghian},
      year={2026},
      eprint={2510.25741},
      archivePrefix={arXiv},
      primaryClass={cs.CL},
      url={https://arxiv.org/abs/2510.25741}, 
}

@misc{fu2026discolooploopingdiscreteembeddings,
      title={DiscoLoop: Looping Discrete Embeddings and Continuous Hidden States for Multi-hop Reasoning}, 
      author={Hengyu Fu and Tianyu Guo and Zixuan Wang and Hanlin Zhu and Jason D. Lee and Jiantao Jiao and Stuart Russell and Song Mei},
      year={2026},
      eprint={2607.00341},
      archivePrefix={arXiv},
      primaryClass={cs.CL},
      url={https://arxiv.org/abs/2607.00341}, 
}

@article{wang2024deepnet,
  title={Deepnet: Scaling transformers to 1,000 layers},
  author={Wang, Hongyu and Ma, Shuming and Dong, Li and Huang, Shaohan and Zhang, Dongdong and Wei, Furu},
  journal={IEEE Transactions on Pattern Analysis and Machine Intelligence},
  volume={46},
  number={10},
  pages={6761--6774},
  year={2024},
  publisher={IEEE}
}

@article{zhang2019fixup,
  title={Fixup initialization: Residual learning without normalization},
  author={Zhang, Hongyi and Dauphin, Yann N and Ma, Tengyu},
  journal={arXiv preprint arXiv:1901.09321},
  year={2019}
}

@article{dehghani2018universal,
  title={Universal transformers},
  author={Dehghani, Mostafa and Gouws, Stephan and Vinyals, Oriol and Uszkoreit, Jakob and Kaiser, {\L}ukasz},
  journal={arXiv preprint arXiv:1807.03819},
  year={2018}
}

@article{lan2019albert,
  title={Albert: A lite bert for self-supervised learning of language representations},
  author={Lan, Zhenzhong and Chen, Mingda and Goodman, Sebastian and Gimpel, Kevin and Sharma, Piyush and Soricut, Radu},
  journal={arXiv preprint arXiv:1909.11942},
  year={2019}
}

@inproceedings{reid2021subformer,
  title={Subformer: Exploring weight sharing for parameter efficiency in generative transformers},
  author={Reid, Machel and Marrese-Taylor, Edison and Matsuo, Yutaka},
  booktitle={Findings of the Association for Computational Linguistics: EMNLP 2021},
  pages={4081--4090},
  year={2021}
}

@inproceedings{giannou2023looped,
  title={Looped transformers as programmable computers},
  author={Giannou, Angeliki and Rajput, Shashank and Sohn, Jy-yong and Lee, Kangwook and Lee, Jason D and Papailiopoulos, Dimitris},
  booktitle={International Conference on Machine Learning},
  pages={11398--11442},
  year={2023},
  organization={PMLR}
}

@article{yang2023looped,
  title={Looped transformers are better at learning learning algorithms},
  author={Yang, Liu and Lee, Kangwook and Nowak, Robert and Papailiopoulos, Dimitris},
  journal={arXiv preprint arXiv:2311.12424},
  year={2023}
}

@article{gatmiry2024can,
  title={Can looped transformers learn to implement multi-step gradient descent for in-context learning?},
  author={Gatmiry, Khashayar and Saunshi, Nikunj and Reddi, Sashank J and Jegelka, Stefanie and Kumar, Sanjiv},
  journal={arXiv preprint arXiv:2410.08292},
  year={2024}
}

@article{saunshi2025reasoning,
  title={Reasoning with latent thoughts: On the power of looped transformers},
  author={Saunshi, Nikunj and Dikkala, Nishanth and Li, Zhiyuan and Kumar, Sanjiv and Reddi, Sashank J},
  journal={arXiv preprint arXiv:2502.17416},
  year={2025}
}

@article{graves2016adaptive,
  title={Adaptive computation time for recurrent neural networks},
  author={Graves, Alex},
  journal={arXiv preprint arXiv:1603.08983},
  year={2016}
}

@article{banino2021pondernet,
  title={Pondernet: Learning to ponder},
  author={Banino, Andrea and Balaguer, Jan and Blundell, Charles},
  journal={arXiv preprint arXiv:2107.05407},
  year={2021}
}

@article{schwarzschild2021can,
  title={Can you learn an algorithm? generalizing from easy to hard problems with recurrent networks},
  author={Schwarzschild, Avi and Borgnia, Eitan and Gupta, Arjun and Huang, Furong and Vishkin, Uzi and Goldblum, Micah and Goldstein, Tom},
  journal={Advances in Neural Information Processing Systems},
  volume={34},
  pages={6695--6706},
  year={2021}
}

@article{bansal2022end,
  title={End-to-end algorithm synthesis with recurrent networks: Extrapolation without overthinking},
  author={Bansal, Arpit and Schwarzschild, Avi and Borgnia, Eitan and Emam, Zeyad and Huang, Furong and Goldblum, Micah and Goldstein, Tom},
  journal={Advances in Neural Information Processing Systems},
  volume={35},
  pages={20232--20242},
  year={2022}
}

@article{geiping2025scaling,
  title={Scaling up test-time compute with latent reasoning: A recurrent depth approach},
  author={Geiping, Jonas and McLeish, Sean and Jain, Neel and Kirchenbauer, John and Singh, Siddharth and Bartoldson, Brian R and Kailkhura, Bhavya and Bhatele, Abhinav and Goldstein, Tom},
  journal={arXiv preprint arXiv:2502.05171},
  year={2025}
}

@article{wang2025hrm,
  title={Hierarchical Reasoning Model},
  author={Wang, Guan and Li, Jin and Sun, Yuhao and Chen, Xing and Liu, Changling and Wu, Yue and Lu, Meng and Song, Sen and Yadkori, Yasin Abbasi},
  journal={arXiv preprint arXiv:2506.21734},
  year={2025}
}

@article{chollet2019measure,
  title={On the Measure of Intelligence},
  author={Chollet, Fran\c{c}ois},
  journal={arXiv preprint arXiv:1911.01547},
  year={2019}
}

@article{hao2024training,
  title={Training large language models to reason in a continuous latent space},
  author={Hao, Shibo and Sukhbaatar, Sainbayar and Su, DiJia and Li, Xian and Hu, Zhiting and Weston, Jason and Tian, Yuandong},
  journal={arXiv preprint arXiv:2412.06769},
  year={2024}
}

@article{goyal2023think,
  title={Think before you speak: Training language models with pause tokens},
  author={Goyal, Sachin and Ji, Ziwei and Rawat, Ankit Singh and Menon, Aditya Krishna and Kumar, Sanjiv and Nagarajan, Vaishnavh},
  journal={arXiv preprint arXiv:2310.02226},
  year={2023}
}

@inproceedings{liu2020understanding,
  title={Understanding the difficulty of training transformers},
  author={Liu, Liyuan and Liu, Xiaodong and Gao, Jianfeng and Chen, Weizhu and Han, Jiawei},
  booktitle={Proceedings of the 2020 Conference on Empirical Methods in Natural Language Processing (EMNLP)},
  pages={5747--5763},
  year={2020}
}

@inproceedings{bachlechner2021rezero,
  title={Rezero is all you need: Fast convergence at large depth},
  author={Bachlechner, Thomas and Majumder, Bodhisattwa Prasad and Mao, Henry and Cottrell, Gary and McAuley, Julian},
  booktitle={Uncertainty in artificial intelligence},
  pages={1352--1361},
  year={2021},
  organization={PMLR}
}

@inproceedings{huang2020improving,
  title={Improving transformer optimization through better initialization},
  author={Huang, Xiao Shi and Perez, Felipe and Ba, Jimmy and Volkovs, Maksims},
  booktitle={International Conference on Machine Learning},
  pages={4475--4483},
  year={2020},
  organization={PMLR}
}

@article{de2020batch,
  title={Batch normalization biases residual blocks towards the identity function in deep networks},
  author={De, Soham and Smith, Sam},
  journal={Advances in Neural Information Processing Systems},
  volume={33},
  pages={19964--19975},
  year={2020}
}

@inproceedings{xiong2020layer,
  title={On layer normalization in the transformer architecture},
  author={Xiong, Ruibin and Yang, Yunchang and He, Di and Zheng, Kai and Zheng, Shuxin and Xing, Chen and Zhang, Huishuai and Lan, Yanyan and Wang, Liwei and Liu, Tieyan},
  booktitle={International conference on machine learning},
  pages={10524--10533},
  year={2020},
  organization={PMLR}
}

@inproceedings{nguyen2019transformers,
  title={Transformers without tears: Improving the normalization of self-attention},
  author={Nguyen, Toan Q and Salazar, Julian},
  booktitle={Proceedings of the 16th international conference on spoken language translation},
  year={2019}
}

@inproceedings{yang2021tensor,
  title={Tensor programs iv: Feature learning in infinite-width neural networks},
  author={Yang, Greg and Hu, Edward J},
  booktitle={International Conference on Machine Learning},
  pages={11727--11737},
  year={2021},
  organization={PMLR}
}

@article{yang2021tuning,
  title={Tuning large neural networks via zero-shot hyperparameter transfer},
  author={Yang, Ge and Hu, Edward and Babuschkin, Igor and Sidor, Szymon and Liu, Xiaodong and Farhi, David and Ryder, Nick and Pachocki, Jakub and Chen, Weizhu and Gao, Jianfeng},
  journal={Advances in Neural Information Processing Systems},
  volume={34},
  pages={17084--17097},
  year={2021}
}

@article{yang2023tensor,
  title={Tensor programs vi: Feature learning in infinite-depth neural networks},
  author={Yang, Greg and Yu, Dingli and Zhu, Chen and Hayou, Soufiane},
  journal={arXiv preprint arXiv:2310.02244},
  year={2023}
}

@article{bordelon2023depthwise,
  title={Depthwise hyperparameter transfer in residual networks: Dynamics and scaling limit},
  author={Bordelon, Blake and Noci, Lorenzo and Li, Mufan Bill and Hanin, Boris and Pehlevan, Cengiz},
  journal={arXiv preprint arXiv:2309.16620},
  year={2023}
}

@article{zhang2026deep,
  title={Deep delta learning},
  author={Zhang, Yifan and Liu, Yifeng and Wang, Mengdi and Gu, Quanquan},
  journal={arXiv preprint arXiv:2601.00417},
  year={2026}
}

@misc{eval-harness,
  author       = {Gao, Leo and Tow, Jonathan and Abbasi, Baber and Biderman, Stella and Black, Sid and DiPofi, Anthony and Foster, Charles and Golding, Laurence and Hsu, Jeffrey and Le Noac'h, Alain and Li, Haonan and McDonell, Kyle and Muennighoff, Niklas and Ociepa, Chris and Phang, Jason and Reynolds, Laria and Schoelkopf, Hailey and Skowron, Aviya and Sutawika, Lintang and Tang, Eric and Thite, Anish and Wang, Ben and Wang, Kevin and Zou, Andy},
  title        = {The Language Model Evaluation Harness},
  month        = 07,
  year         = 2024,
  publisher    = {Zenodo},
  version      = {v0.4.3},
  doi          = {10.5281/zenodo.12608602},
  url          = {https://zenodo.org/records/12608602}
}

@inproceedings{vaswani2017attention,
  title={Attention is all you need},
  author={Vaswani, Ashish and Shazeer, Noam and Parmar, Niki and Uszkoreit, Jakob and Jones, Llion and Gomez, Aidan N and Kaiser, {\L}ukasz and Polosukhin, Illia},
  booktitle={Advances in Neural Information Processing Systems},
  volume={30},
  year={2017}
}

@article{kaplan2020scaling,
  title={Scaling laws for neural language models},
  author={Kaplan, Jared and McCandlish, Sam and Henighan, Tom and Brown, Tom B and Chess, Benjamin and Child, Rewon and Gray, Scott and Radford, Alec and Wu, Jeffrey and Amodei, Dario},
  journal={arXiv preprint arXiv:2001.08361},
  year={2020}
}

@article{hoffmann2022training,
  title={Training compute-optimal large language models},
  author={Hoffmann, Jordan and Borgeaud, Sebastian and Mensch, Arthur and Buchatskaya, Elena and Cai, Trevor and Rutherford, Eliza and Casas, Diego de Las and Hendricks, Lisa Anne and Welbl, Johannes and Clark, Aidan and Hennigan, Tom and Noland, Eric and Millican, Katherine and van den Driessche, George and Damoc, Bogdan and Guy, Aurelia and Osindero, Simon and Simonyan, Karen and Elsen, Erich and Rae, Jack W and Vinyals, Oriol and Sifre, Laurent},
  journal={arXiv preprint arXiv:2203.15556},
  year={2022}
}

\clearpage
\appendix

\renewcommand{\appendixpagename}{\centering \huge Appendix}
\appendixpage
\counterwithin{theorem}{section}

\startcontents[section]
\printcontents[section]{l}{1}{\setcounter{tocdepth}{2}}
\clearpage

\section{Proofs}
\label{app:proofs}

\begin{lemma}[RMSNorm exposes the residual branch through $1/\alpha$]
\label{lem:rmsnorm-local}
Let $\mathrm{RMS}(x)^2=d^{-1}\|x\|^2$, assume $\mathrm{RMS}(x)=1$, and define
$\mathcal{R}(y)=y/\mathrm{RMS}(y)$. If $\mathrm{RMS}(z)/\alpha\le c<1$, then
\begin{equation}
  \mathcal{R}(\alpha x+z)
  =
  x
  +
  \frac{z-\langle x,z\rangle_d x}{\alpha}
  +
  O\left(\frac{\mathrm{RMS}(z)^2}{\alpha^2}\right),
  \qquad
  \langle x,z\rangle_d := d^{-1}x^\top z .
  \label{eq:rmsnorm-expansion}
\end{equation}
\end{lemma}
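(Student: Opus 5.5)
The expansion will come from the scale invariance of $\mathcal{R}$ followed by a second-order Taylor expansion of the inverse square root. Since $\mathcal{R}(\lambda y)=\mathcal{R}(y)$ for every $\lambda>0$, we have $\mathcal{R}(\alpha x+z)=\mathcal{R}(x+u)$ with $u:=z/\alpha$, and $\mathrm{RMS}(u)=\mathrm{RMS}(z)/\alpha=:\epsilon\le c<1$. The statement therefore reduces to showing
\[
\mathcal{R}(x+u)=x+u-\langle x,u\rangle_d\,x+O(\epsilon^2),
\]
where the implied constant depends only on $c$. Multiplying through by $\alpha$ in the first-order term recovers $(z-\langle x,z\rangle_d x)/\alpha$.

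Next, compute $\mathrm{RMS}(x+u)^2=d^{-1}\|x+u\|^2=1+2\langle x,u\rangle_d+\epsilon^2$, using $\mathrm{RMS}(x)=1$. Set $s:=2\langle x,u\rangle_d+\epsilon^2$. By Cauchy--Schwarz in the normalized inner product, $|\langle x,u\rangle_d|\le \mathrm{RMS}(x)\,\mathrm{RMS}(u)=\epsilon$, so $|s|\le 2\epsilon+\epsilon^2$. Moreover $1+s=\mathrm{RMS}(x+u)^2\ge(1-\epsilon)^2\ge(1-c)^2>0$ by the reverse triangle inequality. On the interval $[(1-c)^2,(1+c)^2]$ the function $t\mapsto t^{-1/2}$ is smooth, and its second derivative is bounded by a constant $C_c$. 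Taylor's theorem with remainder then gives
\[
(1+s)^{-1/2}=1-\tfrac{s}{2}+r,\qquad |r|\le C_c s^2=O(\epsilon^2).
\]
Since $s/2=\langle x,u\rangle_d+O(\epsilon^2)$, this yields $\mathrm{RMS}(x+u)^{-1}=1-\langle x,u\rangle_d+O(\epsilon^2)$.

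Finally, multiply by $x+u$:
\[
\mathcal{R}(x+u)=(x+u)\bigl(1-\langle x,u\rangle_d+O(\epsilon^2)\bigr)
=x+u-\langle x,u\rangle_d\,x-\langle x,u\rangle_d\,u+O(\epsilon^2)(x+u).
\]
The cross term $\langle x,u\rangle_d\,u$ has RMS at most $\epsilon^2$. The last term has RMS at most $O(\epsilon^2)(1+c)$. Both are therefore $O(\epsilon^2)$ measured in the RMS norm, which is the natural norm for the $O(\cdot)$ in \eqref{eq:rmsnorm-expansion}. Substituting $u=z/\alpha$ completes the proof.

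The only delicate point is making the remainder uniform. This is why the hypothesis $\epsilon\le c<1$ is needed: it keeps $1+s$ bounded away from zero, so the Taylor constant depends only on $c$ and not on $d$ or $\alpha$. I would also state explicitly that the error is measured in RMS (equivalently $d^{-1/2}\|\cdot\|$); otherwise dimension factors would appear.
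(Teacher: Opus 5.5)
Your proof is correct and follows essentially the same route as the paper's. The paper also Taylor-expands $(1+2m/\alpha+\delta^2/\alpha^2)^{-1/2}$, keeps the remainder uniform using $\delta/\alpha\le c<1$, and multiplies back by $(\alpha x+z)/\alpha$; your rescaling to $u=z/\alpha$ is just its factoring out of $\alpha$. Your version is more careful than the paper's in three places it leaves implicit: you handle the cross term $\langle x,u\rangle_d\,u$ explicitly, you specify the interval $[(1-c)^2,(1+c)^2]$ on which the Taylor constant is taken, and you state that the $O(\cdot)$ remainder is measured in the RMS norm.
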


\begin{proof}[Proof of Lemma~\ref{lem:rmsnorm-local}]
Write $m=\langle x,z\rangle_d$ and
$\delta^2=\mathrm{RMS}(z)^2$. Since $|m|\le\delta$ and
$\delta/\alpha\le c<1$, the quantity below is bounded away from zero:
\[
  \mathrm{RMS}(\alpha x+z)
  =\alpha\left(1+2m/\alpha+\delta^2/\alpha^2\right)^{1/2}.
\]
A first-order Taylor expansion of the inverse square root around one gives
\[
  \left(1+2m/\alpha+\delta^2/\alpha^2\right)^{-1/2}
  =1-m/\alpha+O(\delta^2/\alpha^2),
\]
where the remainder is uniform for $\delta/\alpha\le c$. Multiplying by
$(\alpha x+z)/\alpha$ yields
\[
  \frac{\alpha x+z}{\mathrm{RMS}(\alpha x+z)}
  =x+\frac{z-mx}{\alpha}+O(\delta^2/\alpha^2),
\]
which is Eq.~(\ref{eq:rmsnorm-expansion}).
\end{proof}

\begin{lemma}[Inner normalization is inert on a post-normalized stream]
\label{lem:inner-norm}
Let $\mathcal{R}(y)=y/\mathrm{RMS}(y)$ as in Lemma~\ref{lem:rmsnorm-local}.
\emph{(i)} If $\mathrm{RMS}(x)=1$, then $\mathcal{R}(x)=x$.
\emph{(ii)} For any $x$ with $\mathrm{RMS}(x)>0$ and $d\ge 2$, the Jacobian
of $\mathcal{R}$ is
\[
  \partial\mathcal{R}(x)
  =
  \frac{1}{\mathrm{RMS}(x)}
  \left(I-\hat{x}\hat{x}^{\top}\right),
  \qquad
  \hat{x}:=x/\|x\|,
\]
an orthogonal projection scaled by $1/\mathrm{RMS}(x)$, so its operator norm
is $1/\mathrm{RMS}(x)$; in particular it equals one when
$\mathrm{RMS}(x)=1$.
\end{lemma}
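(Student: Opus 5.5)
Part (i) is immediate from the definition: if $\mathrm{RMS}(x)=1$ then $\mathcal{R}(x)=x/1=x$. No further argument is needed. The substance is (ii), which I would handle by direct differentiation of $\mathcal{R}(x)=x\,\mathrm{RMS}(x)^{-1}$. I would write it as $\mathcal{R}(x)=\sqrt{d}\,x/\|x\|$, since $\mathrm{RMS}(x)=\|x\|/\sqrt{d}$. This reduces the problem to the Jacobian of the map $x\mapsto x/\|x\|$.

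The Jacobian computation goes by the product rule. The gradient of $\|x\|$ is $\hat x^{\top}$, so
\[
\partial\!\left(\frac{x}{\|x\|}\right)=\frac{1}{\|x\|}I-\frac{x\,\hat x^{\top}}{\|x\|^{2}}=\frac{1}{\|x\|}\bigl(I-\hat x\hat x^{\top}\bigr).
\]
Multiplying by $\sqrt{d}$ and using $\sqrt{d}/\|x\|=1/\mathrm{RMS}(x)$ gives the claimed form. The formula needs only $\mathrm{RMS}(x)>0$ to ensure differentiability.

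For the operator norm, $P:=I-\hat x\hat x^{\top}$ is symmetric and idempotent because $\hat x^{\top}\hat x=1$. Hence it is the orthogonal projection onto $\hat x^{\perp}$, and its eigenvalues lie in $\{0,1\}$. The hypothesis $d\ge 2$ is used here: it makes $\hat x^{\perp}$ nontrivial, so eigenvalue $1$ actually occurs and $\|P\|_{\mathrm{op}}=1$ exactly rather than merely $\le 1$ (for $d=1$ the projection would be zero). Scaling then gives $\|\partial\mathcal{R}(x)\|_{\mathrm{op}}=1/\mathrm{RMS}(x)$, which equals one on the unit-RMS stream.

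There is no real obstacle here. The only care needed is keeping the $\sqrt{d}$ normalization consistent between $\mathrm{RMS}$ and the Euclidean norm, and noting where $d\ge2$ enters.
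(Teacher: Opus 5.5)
Your proposal is correct and follows the paper's proof essentially line for line. Like the paper, you rewrite $\mathcal{R}(y)=\sqrt{d}\,y/\|y\|$, differentiate to get $\frac{1}{\mathrm{RMS}(x)}(I-\hat x\hat x^{\top})$, and read off the operator norm from the $\{0,1\}$ spectrum of the orthogonal projection; your remark that $d\ge 2$ is exactly what makes the eigenvalue $1$ occur (so the norm equals, rather than merely is at most, $1/\mathrm{RMS}(x)$) spells out a point the paper leaves implicit.
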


\begin{proof}[Proof of Lemma~\ref{lem:inner-norm}]
Part \emph{(i)} is immediate from the definition. For \emph{(ii)}, write
$\mathcal{R}(y)=\sqrt{d}\,y/\|y\|$ and differentiate:
\[
  \partial\mathcal{R}(x)
  =
  \frac{\sqrt{d}}{\|x\|}
  \left(I-\frac{x x^{\top}}{\|x\|^{2}}\right)
  =
  \frac{1}{\mathrm{RMS}(x)}
  \left(I-\hat{x}\hat{x}^{\top}\right),
\]
using $\|x\|=\sqrt{d}\,\mathrm{RMS}(x)$. The matrix
$I-\hat{x}\hat{x}^{\top}$ is the orthogonal projection onto the complement of
$\mathrm{span}\{x\}$, with eigenvalues in $\{0,1\}$, so the operator norm of
$\partial\mathcal{R}(x)$ is $1/\mathrm{RMS}(x)$.
\end{proof}

Lemma~\ref{lem:inner-norm} transfers the analysis to the sandwich block of
Eq.~(\ref{eq:deeploop-block}). At initialization the normalization gains are
one, the loop entry has unit RMS by the input-embedding normalization, and
every later state is an output of the outer normalization, so
$\mathrm{RMS}(\mathbf{x}_i)=1$ at every visit. By part \emph{(i)} the inner
normalization then evaluates the branch, and hence every parameter derivative
$\partial f_j/\partial\phi_j$, at the unchanged input
$\mathcal{R}(\mathbf{x}_i)=\mathbf{x}_i$; by part \emph{(ii)} each linearized
state-propagation path acquires an additional factor of operator norm one at
each branch traversal. The visit-wise constants of
Assumption~\ref{ass:local-sensitivity}, and the constant $C'$ of
Proposition~\ref{prop:untied}, are therefore unchanged or smaller for the
sandwich block. The tied bound of Proposition~\ref{prop:tied-perturbation} is
stated through the alignment coefficient $\kappa_R$ of
Eq.~(\ref{eq:kappa-def}), which is defined for the architecture at hand:
because the sandwich block satisfies the same visit-wise bounds
$\|U_{r,j}\|\le C_U\beta/\alpha$ and $\|G_{r,j}\|\le C_G\beta/\alpha$, the
triangle inequality again gives $0\le\kappa_R\le R$, and
Proposition~\ref{prop:tied-perturbation},
Corollary~\ref{cor:aligned-loop}, and
Proposition~\ref{prop:exponent-threshold} hold for
Eq.~(\ref{eq:deeploop-block}) exactly as stated, with the worst-case aligned
regime $\kappa_R=\Theta(R)$, and hence the $p=1/2$ exponent, unchanged. The
hierarchical reasoners of Section~\ref{sec:hrm} use the plain post-normalized
block without inner normalization, for which the propositions apply
directly.

\begin{proposition}[Depth-untied first-order stability]
\label{prop:untied}
For a depth-$N$ post-normalized Transformer whose residual-sublayer parameters
are not shared across unrolled depth, with $M=2N$ residual-sublayer visits,
Assumption~\ref{ass:local-sensitivity} implies the first-order bound
Eq.~(\ref{eq:update-bound}), and consequently $M(\beta/\alpha)^2=O(1)$
(Eq.~(\ref{eq:deepnet-condition})) is a sufficient first-order stability
condition. This statement does not depend on whether the input and output token
embeddings are tied.
\end{proposition}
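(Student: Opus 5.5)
The plan is to write the first-order output change as a sum of visit-wise terms and bound each term with Assumption~\ref{ass:local-sensitivity}. Let $\theta=(\theta_1,\ldots,\theta_M)$ collect the residual-branch parameters, with $\theta_i$ owned only by visit $i$ since nothing is shared across depth. Let $\delta\theta=-\eta\,G$ be the optimizer update, where $G_i$ is the effective update of $\theta_i$ with preconditioning and learning-rate constants absorbed. A first-order Taylor expansion of $F(\mathbf{x};\cdot)$ around $\theta$ gives
\[
  \Delta F=\sum_{i=1}^{M}\frac{\partial F}{\partial\theta_i}\,\delta\theta_i+O(\|\delta\theta\|^2).
\]
Because the parameters are untied, each $\delta\theta_i$ depends only on the gradient collected at visit $i$. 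So this is a sum of exactly $M$ terms, $-\eta\sum_i U_iG_i$, with $U_i=\partial F/\partial\theta_i$ the linearized output map at visit $i$. Structurally this is Eq.~(\ref{eq:shared-grad}) with $R=1$ for every sublayer.

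Next, bound each term by submultiplicativity: $\|U_iG_i\|\le\|U_i\|\,\|G_i\|$. Assumption~\ref{ass:local-sensitivity} gives $\|U_i\|\le C_U\beta/\alpha$ and $\|G_i\|\le C_G\beta/\alpha$. The factor $1/\alpha$ is justified by Lemma~\ref{lem:rmsnorm-local}: a branch perturbation $z$ enters the normalized state as $(z-\langle x,z\rangle_d x)/\alpha$ up to $O(\mathrm{RMS}(z)^2/\alpha^2)$. The factor $\beta$ comes from the $\beta$-scaled branch matrices. The triangle inequality then gives $\|\Delta F\|\le \eta C_UC_G\,M(\beta/\alpha)^2$ to first order. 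Setting $C'=\eta C_UC_G$ yields Eq.~(\ref{eq:update-bound}), and requiring the right-hand side to be $O(1)$ gives the sufficient condition Eq.~(\ref{eq:deepnet-condition}).

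For the final sentence of the statement, note that tied input and output embeddings are not among the $M$ residual-branch parameters and are not $\beta$-scaled. Their contribution to $\Delta F$ is a separate term whose size does not depend on $M$, $\alpha$, or $\beta$. It therefore does not enter the depth-dependent count, and tying them does not alter the sum above. Likewise, Lemma~\ref{lem:inner-norm} shows the inner normalization contributes only operator-norm-one factors, so the constants are unaffected.

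The main obstacle is the content hidden in Assumption~\ref{ass:local-sensitivity}. One has to make sure that the sensitivity $\|U_i\|$ already accounts for propagation through all later visits, so that it is uniformly $O(\beta/\alpha)$ rather than accumulating a depth-dependent product of Jacobians. I would handle this by taking the assumption at face value as a bound on the full linearized map from visit $i$ to the output, as in the DeepNet argument. Then the only depth dependence left is the count $M$ of summands, and the proof reduces to the bookkeeping above.
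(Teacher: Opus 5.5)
Your proposal is correct and follows the paper's own proof. Because parameters are untied, the first-order change is a sum of exactly $M$ visit-wise terms; Assumption~\ref{ass:local-sensitivity} bounds each term by $O(\beta/\alpha)\cdot O(\beta/\alpha)$, and summing gives Eq.~(\ref{eq:update-bound}) with a depth-independent $C'$. Your claim that the embedding contribution has size independent of $M$, $\alpha$, $\beta$ is unsupported and not needed, since the embedding clause only requires that $\Delta F$ in Eq.~(\ref{eq:update-bound}) is the change induced by the $\beta$-scaled residual-branch parameters, and tying the embeddings does not change that index set.
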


\begin{proof}[Proof of Proposition~\ref{prop:untied}]
In the depth-untied residual stack, every unrolled residual-sublayer visit has
its own residual-branch parameter tensor. The first-order change is therefore a
sum of $M$ independent visit-wise
perturbation terms. By Assumption~\ref{ass:local-sensitivity}, each term is
bounded by an output sensitivity $O(\beta/\alpha)$ times an effective update
$O(\beta/\alpha)$. Summing over $M$ visits gives
\[
  \|\Delta F\|
  \le C' M (\beta/\alpha)^2,
\]
for a depth-independent constant $C'$, which implies the stated sufficient
condition.
\end{proof}

\begin{proposition}[Tied loop perturbation]
\label{prop:tied-perturbation}
Under Assumption~\ref{ass:local-sensitivity}, a looped Transformer with
$J=2K$ physical residual sublayers and $R$ visits satisfies the first-order
bound Eq.~(\ref{eq:loop-update-bound}), so
$M\kappa_R(\beta/\alpha)^2=O(1)$ (Eq.~(\ref{eq:loop-condition})) is a sufficient tied-depth stability condition.
\end{proposition}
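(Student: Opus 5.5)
The plan is to start from the first-order tied expansion, Eq.~(\ref{eq:shared-grad}), and bound each physical-sublayer summand separately. First I justify the double-sum form. Because $\phi_j$ is shared, the chain rule makes the gradient with respect to $\phi_j$ equal to the sum over $r$ of the visit-wise gradients. The optimizer step therefore moves $\phi_j$ by $-\eta\sum_{t=1}^{R}G_{t,j}$, with optimizer preconditioning absorbed into $G$ as the statement allows.

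Linearizing $F$ in the parameters, the output change is the directional derivative along this update. Since $\phi_j$ enters the unrolled computation at each of its $R$ visits, that derivative equals $\sum_{r=1}^{R}U_{r,j}$ applied to the common update. Summing over the $J$ physical sublayers, and noting that distinct $j$ own distinct tensors exactly as in the untied Proposition~\ref{prop:untied}, gives Eq.~(\ref{eq:shared-grad}) up to $O(\eta^2)$.

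Next I take norms. The triangle inequality over $j$ gives
\[
\|\Delta F\|\le \eta\sum_{j=1}^{J}\Bigl\|\sum_{r}U_{r,j}\Bigr\|\,\Bigl\|\sum_{t}G_{t,j}\Bigr\| + O(\eta^2),
\]
using submultiplicativity of the operator norm. By the definition of $\kappa_R$ in Eq.~(\ref{eq:kappa-def}), taken as a maximum over $j$, each summand is at most $\kappa_R R\,C_UC_G(\beta/\alpha)^2$. The visit-wise constants $C_U,C_G$ exist by Assumption~\ref{ass:local-sensitivity}, and by Lemma~\ref{lem:inner-norm} they carry over to the sandwich block. Summing the $J$ summands gives the bound $J R\,\kappa_R\,C_UC_G(\beta/\alpha)^2$. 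Since $JR=M$, setting $C''=\eta C_UC_G$ (with higher-order terms dropped at first order) yields Eq.~(\ref{eq:loop-update-bound}). The sufficiency of Eq.~(\ref{eq:loop-condition}) then follows immediately: if $M\kappa_R(\beta/\alpha)^2=O(1)$, the first-order output change is $O(1)$ uniformly in depth.

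The main obstacle is making the linearization step rigorous rather than schematic. One issue is ensuring the read operators $U_{r,j}$ at the different visits are all evaluated at the same pre-update state. Another is ensuring they compose with a single shared update through the tensor contraction, so that the factorization into $\|\sum_r U_{r,j}\|\,\|\sum_t G_{t,j}\|$ is legitimate. I would handle this by defining $U_{r,j}$ as the Fréchet derivative of $F$ with respect to the copy of $\phi_j$ at visit $r$, treating the visits as formally untied copies. The tied derivative is then the sum of these partials evaluated on the diagonal, which makes the contraction a bona fide linear operator applied to a single vector. With that in place, the operator-norm inequality is standard.

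The claim that $\kappa_R\in[0,R]$ is not needed for the bound itself; it only serves to interpret the bound. It follows from the triangle inequality applied to both sums.
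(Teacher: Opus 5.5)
Your proposal is correct and follows the paper's proof step for step: submultiplicativity and the triangle inequality applied to Eq.~(\ref{eq:shared-grad}), the per-sublayer bound $R\kappa_R C_UC_G(\beta/\alpha)^2$ taken from the definition of $\kappa_R$ as a maximum over $j$, and summation over the $J$ physical sublayers using $JR=M$, with $\eta$, $C_U$, $C_G$ absorbed into $C''$. The only addition is your justification of the double-sum form, via formally untied per-visit copies whose partial derivatives are summed on the diagonal, which the paper does not prove and instead takes as given from Section~\ref{sec:method:loop}.
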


\begin{corollary}[Worst-case aligned loop bound]
\label{cor:aligned-loop}
If $\kappa_R=\Theta(R)$, then Eq.~(\ref{eq:loop-condition}) reduces to
$MR(\beta/\alpha)^2 = O(1)$.
\end{corollary}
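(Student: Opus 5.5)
The corollary follows by substitution into the sufficient condition of Proposition~\ref{prop:tied-perturbation}. I will treat $\kappa_R=\Theta(R)$ as asserting constants $0<a\le b$ with $aR\le\kappa_R\le bR$ for all $R$ in the regime under consideration. Multiplying through by the nonnegative quantity $M(\beta/\alpha)^2$ gives the two-sided comparison
\[
  a\,MR\left(\frac{\beta}{\alpha}\right)^2
  \le
  M\kappa_R\left(\frac{\beta}{\alpha}\right)^2
  \le
  b\,MR\left(\frac{\beta}{\alpha}\right)^2 .
\]

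The upper inequality shows that $MR(\beta/\alpha)^2=O(1)$ implies Eq.~(\ref{eq:loop-condition}). Hence, by Proposition~\ref{prop:tied-perturbation}, the bound Eq.~(\ref{eq:loop-update-bound}) becomes $\|\Delta F\|\le C''b\,MR(\beta/\alpha)^2$. The lower inequality gives the converse: if $M\kappa_R(\beta/\alpha)^2=O(1)$, then $MR(\beta/\alpha)^2\le a^{-1}M\kappa_R(\beta/\alpha)^2=O(1)$. The two conditions are therefore equivalent, which is what ``reduces to'' means here.

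The one point needing care is that the constants hidden in $\Theta(R)$ must not depend on $N$, $M$ or $\beta/\alpha$. Otherwise the $O(1)$ statements could absorb a growing factor. I will state this uniformity explicitly; it is already implicit in the definition Eq.~(\ref{eq:kappa-def}), where $\kappa_R$ is normalized by $C_UC_G(\beta/\alpha)^2$. Beyond that, the argument is pure bookkeeping.
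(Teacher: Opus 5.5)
Your proposal is correct and matches the paper, which gives no separate proof and treats the corollary as the immediate substitution of $\kappa_R=\Theta(R)$ into Eq.~(\ref{eq:loop-condition}); your two-sided comparison makes the same step explicit and adds the converse direction. One small refinement: the upper constant is automatically $b=1$, because the triangle inequality applied to the sums in Eq.~(\ref{eq:kappa-def}) gives $\kappa_R\le R$ unconditionally, so only the lower constant $a$ is a genuine uniformity hypothesis rather than something implicit in the definition.
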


\begin{proof}[Proof of Proposition~\ref{prop:tied-perturbation}]
Using Eq.~(\ref{eq:shared-grad}) and submultiplicativity,
\[
  \|\Delta F_{\mathrm{tied}}\|
  \le
  \eta\sum_{j=1}^{J}
  \left\|\sum_{r=1}^{R} U_{r,j}\right\|
  \left\|\sum_{t=1}^{R} G_{t,j}\right\|
  +O(\eta^2).
\]
By the definition of $\kappa_R$, each physical sublayer contributes at most
$R\kappa_R C_UC_G(\beta/\alpha)^2$. Summing over $J$ physical sublayers gives
\[
  JR\kappa_R C_UC_G(\beta/\alpha)^2
  =M\kappa_R C_UC_G(\beta/\alpha)^2.
\]
Absorbing $\eta$, $C_U$, and $C_G$ into the constant proves
Eq.~(\ref{eq:loop-update-bound}).
\end{proof}

\begin{proof}[Proof of Proposition~\ref{prop:exponent-threshold}]
Substituting Eq.~(\ref{eq:beta-over-alpha}) into the tied-depth condition gives
\[
  M\kappa_R\left(\frac{\beta}{\alpha}\right)^2
  =2N\,\Theta(R^\gamma)\,(cd)^{-2p}N^{-4p}.
\]
At fixed physical depth $K$, $R=N/K$, so the $N$-dependence is
$\Theta(N^{1+\gamma-4p})$. This quantity remains uniformly bounded as
$R\to\infty$ if and only if $1+\gamma-4p\le0$, equivalently
$p\ge(1+\gamma)/4$.
\end{proof}


\clearpage

\section{Small-scale downstream evaluation}
\label{app:small-downstream}

We report the GPT-2 small version of the downstream
evaluation from \S\ref{sec:experiments:downstream}. The architecture
is identical to the medium backbone except for hidden size $768$
(vs.\ $1024$) and $12$ layers (vs.\ $24$); training uses the same
FineWeb-Edu 50BT data and schedule but on $4\times$H200~141\,GB GPUs.
The same eight-task \texttt{lm-evaluation-harness} protocol applies.
Table~\ref{tab:downstream-small} reports per-task accuracy and Avg;
Figures~\ref{fig:downstream-avg-small}
and~\ref{fig:downstream-per-task-small} visualize the 1-shot Avg and the per-task 1-shot accuracy, respectively.

\begin{table}[ht]
  \centering
  \footnotesize
  \caption{Downstream accuracy (\%) for the GPT-2 small backbone.
    Metric is \texttt{acc}; Avg is the unweighted arithmetic mean
    across the eight tasks. Bold marks the best cell per column per
    shot setting.}
  \label{tab:downstream-small}
  \begin{tabular}{lcccccccc c}
    \toprule
    Method / $R$ & ARC-C & ARC-E & HellaSwag & OBQA & PIQA & SciQ & SIQA & WG & \textbf{Avg} \\
    \midrule
    \multicolumn{10}{l}{\emph{0-shot}} \\
    base $R{=}1$         & 23.12 & 57.11 & 31.95 & 20.60 & 65.83 & 81.90 & 37.87 & 51.62 & 46.25 \\
    base $R{=}3$         & 23.89 & 60.40 & 33.70 & 20.20 & 66.76 & 83.20 & 37.21 & 52.09 & 47.18 \\
    base $R{=}5$         & 26.62 & \textbf{61.49} & 34.14 & 20.80 & 66.49 & \textbf{83.80} & 38.54 & 53.12 & 48.12 \\
    DeepLoop $R{=}1$     & 24.49 & 58.38 & 32.31 & 20.40 & 65.83 & 80.90 & 38.23 & 51.30 & 46.48 \\
    DeepLoop $R{=}3$     & \textbf{26.96} & 60.31 & 33.98 & \textbf{22.00} & 67.79 & \textbf{83.80} & \textbf{39.87} & 53.20 & \textbf{48.49} \\
    DeepLoop $R{=}5$     & 25.09 & 61.36 & \textbf{34.86} & 21.00 & \textbf{68.50} & 82.20 & 39.46 & \textbf{53.75} & 48.28 \\
    \midrule
    \multicolumn{10}{l}{\emph{1-shot}} \\
    base $R{=}1$         & 26.11 & 57.95 & 31.95 & 19.80 & 65.78 & 83.50 & 38.38 & 52.57 & 47.00 \\
    base $R{=}3$         & 24.74 & 61.28 & 33.42 & 20.80 & \textbf{67.68} & 86.80 & 37.87 & 52.41 & 48.13 \\
    base $R{=}5$         & 26.88 & 62.08 & 33.92 & \textbf{23.20} & 67.52 & \textbf{87.10} & \textbf{40.02} & 52.57 & 49.16 \\
    DeepLoop $R{=}1$     & 25.26 & 57.15 & 31.90 & 21.00 & 67.14 & 82.80 & 38.89 & 51.30 & 46.93 \\
    DeepLoop $R{=}3$     & 26.88 & 61.41 & 33.91 & 21.00 & 67.46 & \textbf{87.10} & 39.76 & 53.83 & 48.92 \\
    DeepLoop $R{=}5$     & \textbf{28.24} & \textbf{62.67} & \textbf{34.64} & 22.60 & 67.41 & 87.00 & \textbf{40.02} & \textbf{54.70} & \textbf{49.66} \\
    \bottomrule
  \end{tabular}
\end{table}

\begin{figure}[ht]
  \centering
  \includegraphics[width=0.66\linewidth]{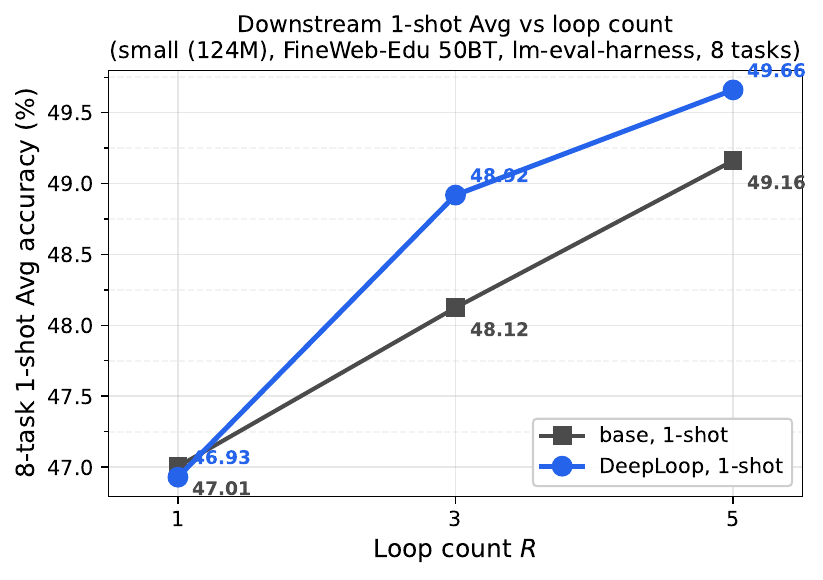}
  \caption{Eight-task 1-shot Avg accuracy against loop count $R$ for
    the GPT-2 small backbone, base versus DeepLoop. Numbers match
    the 1-shot Avg column of Table~\ref{tab:downstream-small}.}
  \label{fig:downstream-avg-small}
\end{figure}

\begin{figure}[ht]
  \centering
  \includegraphics[width=0.95\linewidth]{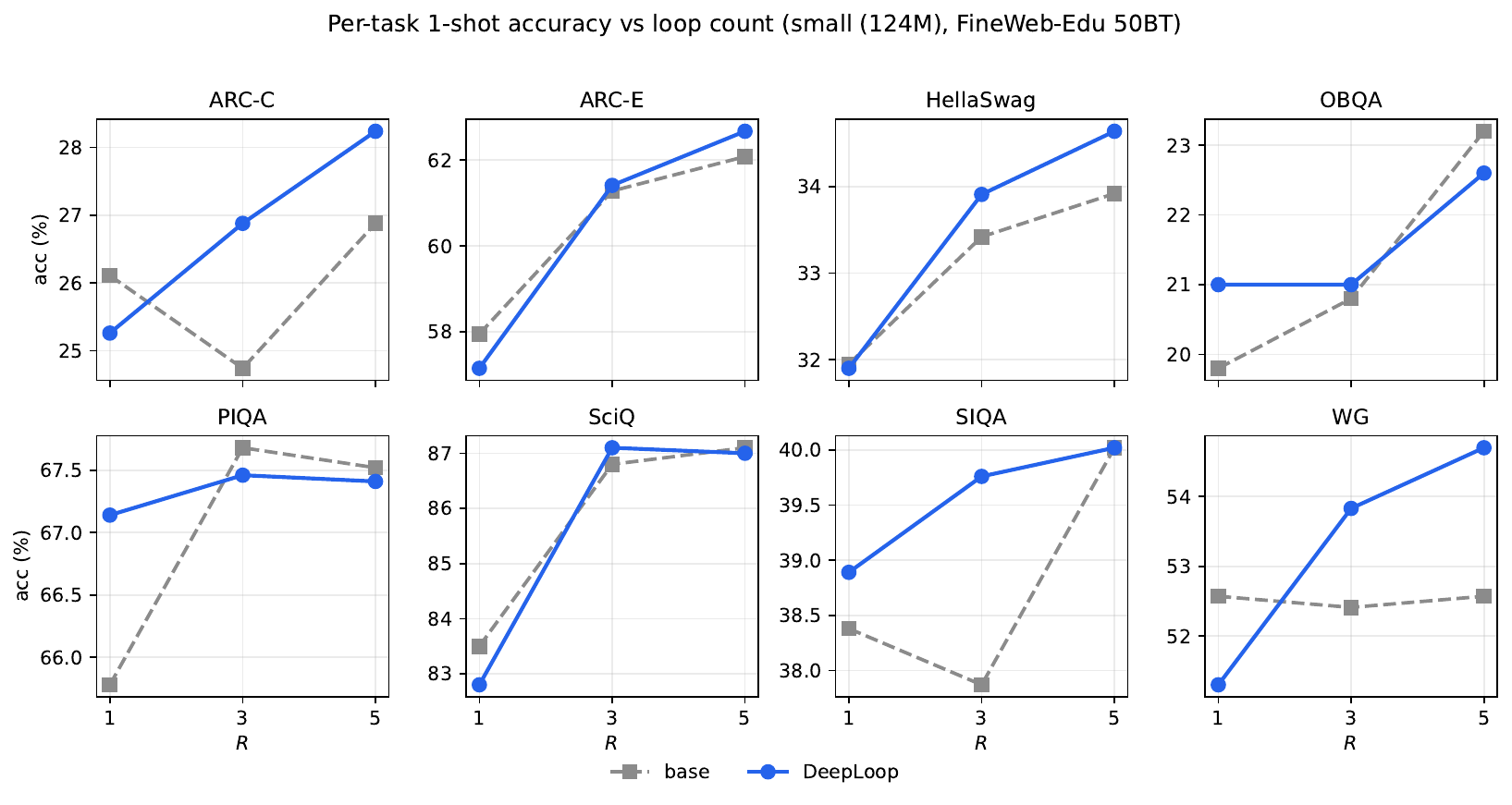}
  \caption{Per-task 1-shot accuracy against loop count $R$ for base
    versus DeepLoop on all eight tasks at the GPT-2 small scale.
    Numbers match Table~\ref{tab:downstream-small}.}
  \label{fig:downstream-per-task-small}
\end{figure}

The two methods are within $\pm 0.25$ points of each other at $R{=}1$
on both shot settings, and DeepLoop has higher Avg at $R{=}3$ (by
$+1.31$ and $+0.79$ points on 0-shot and 1-shot respectively) and at
$R{=}5$ (by $+0.15$ and $+0.50$). The headline small-scale cell is
DeepLoop $R{=}5$ 1-shot at $49.66\,\%$. Against the same-$R$ baseline
(base $R{=}5$ 1-shot, $49.16\,\%$), DeepLoop wins on four of the
eight tasks (ARC-C, ARC-E, HellaSwag, WinoGrande), is within $0.11$
points on three tasks (PIQA, SciQ, SIQA), and loses clearly only on
OpenBookQA.

\clearpage

\section{Empirical \texorpdfstring{$p$}{p}-sweep at fixed loop count}
\label{app:psweep}

Proposition~\ref{prop:exponent-threshold} predicts a $p{=}1/2$
training-stability threshold under the worst-case aligned regime.
We test this prediction directly with a single-axis sweep over the
exponent $p$ at fixed $R{=}3$ on the GPT-2 small backbone. Every sweep cell
uses the sandwich block of Eq.~(\ref{eq:deeploop-block}). All other
ingredients, tied embeddings, input-embedding RMSNorm, FineWeb-Edu
50BT data pipeline, batch size, learning rate, and seed protocol, are
identical to the DeepLoop runs in Table~\ref{tab:loop-valloss}; only the
residual
scaling $\alpha=(2N)^p$ and initialization gain $\beta=(8N)^{-p}$
change. We run a grid of seven values, $p\in\{0.30,0.35,0.40,0.45,0.50,
0.55,0.60\}$, with up to five seeds per value. Each cell is a 90-minute
run on $4\times$H200 141\,GB GPUs, which under our throughput reaches
roughly $2{,}700$ optimizer steps; we report validation loss at the
common comparison checkpoint of step~$2000$ where available, and use
the trailing-50-iter mean train loss as a surrogate for runs that do not reach a post-warmup validation step.

\begin{figure}[ht]
  \centering
  \includegraphics[width=0.62\linewidth]{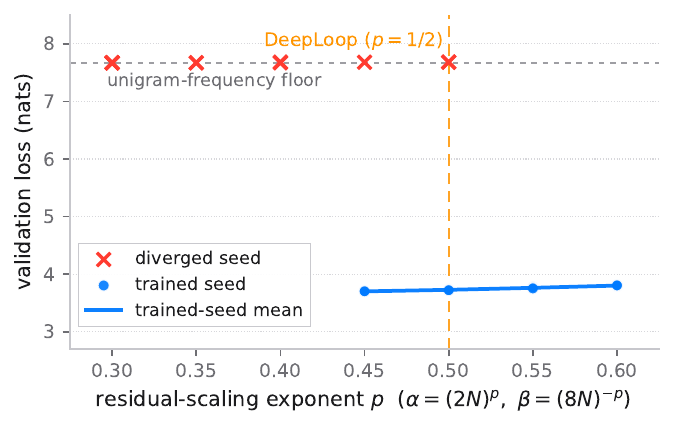}
  \caption{Validation-loss snapshot at step~$2000$ on FineWeb-Edu 50BT
    for the GPT-2 small backbone, $R{=}3$, sweeping the residual-scaling
    exponent $p\in\{0.30,\ldots,0.60\}$. Each marker is the most recent
    attempt for one $(p,\mathrm{seed})$ cell; several cells were
    relaunched, and the pooled per-seed escape counts over all attempts
    are reported in the text. Red
    crosses are attempts that do not escape the unigram-frequency floor
    ($\approx 7.67$\,nats); blue circles are attempts that train; the blue
    line connects the per-$p$ trained-attempt mean. Escape becomes more
    frequent as $p$ increases, and every seed trains at $p\ge 0.55$; the
    transition brackets the
    Proposition~\ref{prop:exponent-threshold} worst-case prediction of
    $p{=}1/2$.
    Conditional on convergence, larger $p$ gives a slightly higher loss
    (less aggressive learning).}
  \label{fig:p-sweep}
\end{figure}

The picture in Figure~\ref{fig:p-sweep} is consistent with the analysis.
Several $(p,\mathrm{seed})$ cells were launched more than once for
scheduling reasons; we count a seed as escaping if \emph{any} of its
attempts reaches a validation evaluation below the unigram floor
($\approx 7.67$\,nats) within the budget. Under this pooled count, the
per-seed escape fractions are $0/5$, $1/5$, $2/5$, $2/5$, $3/5$, $5/5$,
and $5/5$ for $p=0.30,\ldots,0.60$: escape is the minority outcome in
the clearly sub-threshold region $p\le 0.40$, remains unreliable through
$p=0.50$, and becomes universal from $p=0.55$. At this short horizon the
transition is gradual rather than sharp, and it brackets the $p{=}1/2$
worst-case prediction of Proposition~\ref{prop:exponent-threshold}
rather than pinpointing it. Second, among the plotted attempts that do
train, the mean validation loss at the same step rises monotonically
with $p$ ($3.70$ at $p{=}0.45$, $3.73$ at $p{=}0.50$, $3.76$ at
$p{=}0.55$, $3.80$ at $p{=}0.60$), confirming that more aggressive
scaling (smaller $p$) delivers a stronger learning signal whenever it
does not destabilize training. We adopt $p{=}1/2$ as the DeepLoop
default on the strength of the worst-case analysis; the sweep places
the empirical transition around it, with failures concentrated below it
and the loss penalty for exceeding it growing with $p$. Choices
$p<1/2$ produce lower trained-attempt mean loss when they do train, but
with at least $3/5$ of seeds failing to escape the floor at every
$p\le 0.45$, they are not safe defaults at this horizon.

The $p$-sweep is small in scope: it is at one scale (GPT-2 small), one
loop depth ($R{=}3$), and one optimizer-step budget. Whether the same
boundary at $p{=}1/2$ applies at larger $K$, at different normalization
placements, or at substantially longer training, is left to future work.

\clearpage

\section{Compute resources}
\label{app:compute}

All experiments were run on NVIDIA H200 141\,GB GPUs in a SLURM cluster.
GPT-2 small cells use $4\times$H200, and GPT-2 medium cells
use $8\times$H200. Per-cell wall-clock time is read from the corresponding
WandB run records; aggregate GPU-hours below are rounded to the nearest
hundred.

\paragraph{Per-cell cost.} Wall-clock time scales roughly linearly with
the loop count $R$:

\begin{center}
\small
\begin{tabular}{lrrrr}
  \toprule
  & $R{=}1$ & $R{=}3$ & $R{=}5$ & $R{=}7$ \\
  \midrule
  Small wall-clock (h)    & $9$  & $22$  & $35$  & $50$  \\
  Small GPU-hours / cell  & $37$ & $90$  & $145$ & $200$ \\
  \midrule
  Medium wall-clock (h)   & $12$ & $32$  & $52$  & $77$  \\
  Medium GPU-hours / cell & $95$ & $260$ & $420$ & $620$ \\
  \bottomrule
\end{tabular}
\end{center}

\paragraph{Aggregate cost.} The reported FineWeb-Edu language-modeling
matrix (Tables~\ref{tab:loop-valloss} and~\ref{tab:downstream-medium}: 16
cells $\{$base, DeepLoop$\}\times\{$small, medium$\}\times R\in\{1,3,5,7\}$)
totals approximately $\mathbf{3{,}700}$ H200 GPU-hours. Exploratory
experiments not included in the main paper, a range of
optimization-related variants and additional benchmark sweeps we
evaluated during the development of the method and ultimately did not
report, add approximately $\mathbf{7{,}000}$ GPU-hours.

The total project compute is therefore approximately $\mathbf{10{,}700}$
H200 GPU-hours.

\end{document}